\theoremstyle{plain}
\newtheorem{theorem}{Theorem}[section]
\newtheorem{proposition}[theorem]{Proposition}
\newtheorem{lemma}[theorem]{Lemma}
\newtheorem{corollary}[theorem]{Corollary}
\theoremstyle{definition}
\newtheorem{definition}[theorem]{Definition}
\theoremstyle{remark}
\newtheorem{remark}[theorem]{Remark}
\newcommand{\rad}{\hat{\mathfrak{R}}}
\newcommand{\radpop}{\mathfrak{R}}
\newcommand{\radpopj}{\mathfrak{R}^{(j)}}
\newcommand{\gau}{\hat{\mathcal{G}}}
\newcommand{\gaupop}{\mathcal{G}}
\newcommand{\gaupopj}{\mathcal{G}^{(j)}}
\newcommand{\E}{\mathbb{E}}
\newcommand{\cD}{\mathcal{D}}
\newcommand{\cW}{\mathcal{W}}
\newcommand{\cX}{\mathcal{X}}
\newcommand{\cY}{\mathcal{Y}}
\newcommand{\cZ}{\mathcal{Z}}
\newcommand{\cH}{\mathcal{H}}
\newcommand{\cN}{\mathcal{N}}
\newcommand{\I}{\mathbb{I}}
\newcommand{\Dbar}{\Tilde{\mathcal{D}}}
\newcommand{\R}{\mathbb{R}}
\newcommand{\norm}[1]{\lvert\lvert #1\rvert\rvert}
\newcommand{\ip}[2]{\langle #1,#2\rangle}
\newcommand{\tD}{\Tilde{\mathcal{D}}}
\newcommand{\hf}{\widehat{f}}
\title{\textbf{Learning and Generalization with Mixture Data}}
\begin{document}
\title{Learning and Generalization with Mixture Data}

\author{
\IEEEauthorblockN{Harsh Vardhan
\IEEEauthorblockA{Computer Science \& Engineering,\\
University of California, San Diego,\\
La Jolla, CA, USA.\\
Email: hharshvardhan@ucsd.edu}}
\and
\IEEEauthorblockN{Avishek Ghosh
\IEEEauthorblockA{Computer Science \& Engineering,\\
Indian Institute of Technology, Bombay,\\
Mumbai, Maharashtra, India.\\
Email: avishek\_ghosh@iitb.ac.in}}
\and
\IEEEauthorblockN{Arya Mazumdar
\IEEEauthorblockA{Hal\i c\i o\u glu Data Science Institute,\\
University of California, San Diego,\\
La Jolla, CA, USA.\\
Email: arya@ucsd.edu}}
}

\maketitle

\begin{abstract}
In many, if not most, machine learning applications the training data is naturally heterogeneous (e.g. federated learning, adversarial attacks and domain adaptation in neural net training). Data heterogeneity is identified as one of the major challenges in modern day large-scale learning. A classical way to represent heterogeneous data is via a mixture model. In this paper, we study generalization performance and statistical rates when data is sampled from a mixture distribution. We first characterize the heterogeneity of the mixture in terms of the pairwise total variation distance of the sub-population distributions. Thereafter, as a central theme of this paper, we characterize the range where the mixture may be treated as a single (homogeneous) distribution for learning. In particular, we study the generalization performance under the classical PAC framework and the statistical error rates for parametric (linear regression, mixture of hyperplanes) as well as non-parametric (Lipschitz, convex and H\"older-smooth) regression problems. In order to do this, we obtain Rademacher complexity and (local) Gaussian complexity bounds with mixture data, and apply them to get the generalization and convergence rates respectively. We observe that as the (regression) function classes get more complex, the requirement on the pairwise total variation distance  gets stringent, which matches our intuition. We also do a finer analysis for the case of mixed linear regression and provide a tight bound on the generalization error in terms of heterogeneity.
\end{abstract}

\vspace{-1mm}
\section{Introduction}
\label{sec:intro}
\vspace{-1mm}
It is a common assumption in elementary machine learning that ``data'', which comprise of tuples of \texttt{(feature, label)}, are generated IID (identical and independently distributed) from some distribution $\cD$. Conditions of learning depend on the properties of $\cD$, and rigorous statistical performances (either via so-called uniform convergence bounds or sharp statistical error rates via M-estimation) are extremely well-studied ~\cite[Ch.~4 and 13]{wainwright2019high}, \cite{vaart_1998}.

However, often times in practice, data sources are diverse, and the IID assumption fails in many ways. In some cases, data may come from different sources, and therefore follow different distributions. Due to other benefits we may still want to learn a single model using the combined training data. 
A natural setup where \emph{non-iid} data naturally arises is in the very popular federated learning framework \cite{mcmahan2016communication,mcmahan2017federated}, where data lies in users' personal devices, and since no two users are identical, one needs to learn with heterogeneous data.
It may be the case that data is in fact coming from a mixture distribution, where components represents sub-populations (such as medical and biological data, recommendation systems etc.~\cite{covington2016deep}).

In statistics, a standard way to represent the presence of sub-populations within an overall population is the use of mixture models. Mixture of Gaussians have been extremely well-studied from a theoretical as well as an algorithmic lens over the last several decades ~\cite{dasgupta1999learning}. 
Apart from the unsupervised setup, the mixture models have been investigated for several supervised machine learning problems as well. Most prominent among these is the problem of {\em mixed linear regression}~\cite{de1989mixtures}. In this setting, the label is  generated by randomly picking a hyperplane uniformly from a set of finitely many hyperplanes. The goal here is to recover the underlying regressors from observations. This problem has received considerable attention in the recent past, both in the framework where a generative structure is assumed on the label~\cite{yi2014alternating,yi2016solving,balakrishnan2017statistical,klusowski2019estimating,krishnamurthy2019sample,pal2020recovery,gandikota2020recovery} as well in the non-generative framework, where given (covariate, label) pairs, the objective is to fit a finite number of lines~\cite{ghosh_mixed}.

However, although statistical learning with mixture data has been a topic of interest over the last few decades, most advances are made either for the parameter recovery type problems in the unsupervised setup (e.g. Gaussian mixtures) or in a relatively simple supervised setup like mixed linear regression. In this paper, we study a statistical learning perspective under the assumption that the data is sampled from a mixture distribution. Moreover, we work in the supervised learning paradigm. 

We  start by characterizing the heterogeneity in a mixture distribution in a \emph{generic} way.  Suppose the mixture distribution is $\Tilde{\mathcal{D}}$, which is defined as 
$\tilde{\cD} = \sum_{j\in [m]} a_j\cD_j$, where $\{\cD_j\}_{j=1}^m$ are the $m$ base distributions on $\cX \times \cY$, and $\{a_j\}_{j=1}^m$ are the $m$ mixture weights, with $a_j \geq 0, \, \forall j \in [m]$ and $\sum_{j=1}^m a_j = 1$. Then a natural way to characterize  the heterogeneity in $\Tilde{\mathcal{D}}$ 
is via an appropriate statistical distance between the component distributions. If this distance is large, then the statistical behavior with $\Tilde{\mathcal{D}}$ deviates a lot from any of the base distributions. On the other hand, if the distance is small, then $\Tilde{\mathcal{D}}$ approximately behaves like a homogeneous distribution altogether.

\begin{definition}($\gamma$-Heterogeneous Mixture)\label{assump:tv}

Suppose, for the $j$th component distribution, we define, $\gamma_j \equiv \norm{\cD_j -\tilde{\mathcal{D}}}_{TV}, j \in [m],$ where $\|P-Q\|_{TV}\equiv \sup_{A\subseteq \Omega}(P(A)-Q(A))$ stands for the total variation (TV) distance between the distributions $P$ and $Q$ with common sample space $\Omega$; and $\gamma\equiv \max_j \gamma_j.$
\end{definition}
Note that $\gamma$ in the above definition is the TV radius of the set of base distributions. Instead we could have defined $\gamma$ as the diameter of the space, 
and all our results will still  hold.
Throughout the paper, we assume that the  mixture distribution $\Tilde{\cD}$ is unknown and the learner can draw samples from $\Tilde{\cD}$. 

Our goal is to learn functions from $\cX \rightarrow \cY$ for $\cX \subseteq \mathbb{R}^{d}$, that best fits a mixture  distribution $\Tilde{\cD}$ over $(\cX, \cY)$. We assume that the learner is given access to $n$ data samples $\{x_i, y_i\}_{i=1}^{n}$ from $\Tilde{\cD}$, with maximum pairwise TV distance of $\gamma$. As usual, in the PAC setting, there exists a hypothesis  class (or mapping class) $\mathcal{F}: \cX \rightarrow \cY$, and our goal is to learn $\mathcal{F}$ through samples.

Let $\ell(f(x),y)$ be the loss function with datapoint $(x,y)$ and mapping $f \in \mathcal{F}$. Also, let $L(f) = \mathbb{E}_{(x,y)\sim \Tilde{\mathcal{D}}}\ell(f(x),y)$ be the population loss and $L_n(f) = \frac{1}{n}\sum_{i=1}^n \ell(f(x_i),y_i)$ be the empirical loss. The goal in generalization is to minimize the `excess risk' given by $L(\hat{f}) - L(f^*)$, where $\hat{f} = \mathrm{argmin}_{f\in \mathcal{F}}   L_n (f) $ and $f^* = \mathrm{argmin}_{f\in \mathcal{F}} L(f) $. It is easy to see that the excess risk is upper-bounded by 2 $ \sup_{f\in \mathcal{F}}|L_n(f) - L(f)|$, known as the generalization error.

Since we will be concerned only about generalization error, a good setting for our results is to assume, 
\begin{equation}\label{eq:same_model}
\arg\min_{f \in \mathcal{F}} \E_{\cD_j}\ell(f(x), y) = f^\star, \forall j.
\end{equation}

This ensures that we only consider the generalization error due to data heterogeneity, and not error resulting from {\em model mismatch}, though as we will see in 
Section~\ref{sec:mix_of_hyperplanes}, the latter can also contribute to total risk.
\vspace{-2mm}
\subsection{Summary of Contributions}
\vspace{-1mm}
In this paper, we aim to characterize several learning theoretic objectives when the learner has samples from the mixture distribution $\tD$. We study how learning from mixtures affects the performance in terms of (a) generalization in the PAC framework, and (b) statistical error rates in the least square regression in both parametric and non-parametric setup. 

\textbf{Generalization with mixtures:} In Section~\ref{sec:gen}, we start with the generalization performance with $\Tilde{\mathcal{D}}$ by controlling the Rademacher complexity in the standard PAC framework. In particular, we assume that the datapoints (covariate, label) $(x_i,y_i)_{i=1}^n \sim \Tilde{\mathcal{D}}$, where $\Tilde{\mathcal{D}}$ is a $\gamma$-heterogeneous mixture distribution. We first control the Rademacher complexity (corresponding to the underlying hypothesis class that maps the covariates to labels) with $\Tilde{\mathcal{D}}$. Then, by appropriately defining the generalization error and connecting it to the Rademacher complexity, we obtain generalization guarantees with mixture distribution $\Tilde{\mathcal{D}}$. In particular, we obtain the range of $\gamma$ where the generalization of $\Tilde{\mathcal{D}}$ is (order-wise) same as that of a homogeneous base distribution $\mathcal{D}_j$, for any $j \in [m]$.

\textbf{Statistical rates for mixed non-parametric least squares:} In Section~\ref{sec:non_param}, we obtain the statistical rates of several regression problems with least squares objective. We start with a simple linear regression, and we move to the more generic statistical problem of non-parametric regression. In particular we study Lipschitz ($\mathcal{F}_{Lip}$), Convex-Lipschitz ($\mathcal{F}_{conv}$) and $\alpha$-H\"older smooth ($\mathcal{S}_{\alpha}$) regression problems \cite{vaart_1998,wainwright2019high} where the covariates are drawn from a $\gamma$-heterogeneous mixture $\Tilde{\mathcal{D}}$, and  we obtain sharp statistical rates (also called prediction error) for such problems. In order to obtain this, we first characterize the local Gaussian complexity with mixed data corresponding to these non-parametric function classes; and then using the connection of Gaussian complexities with prediction error \cite[Chap. 13]{wainwright2019high}, we obtain the statistical error of these problems. Maintaining the theme of this paper, we also characterize the range of $\gamma$ where the statistical rate with $\Tilde{\mathcal{D}}$ matches to that of any (homogeneous) base distribution.

We observe that as the (regression) function classes get more complex, the requirement on $\gamma$ to maintain the same statistical rate gets stringent, which matches our intuition. Indeed, the $\gamma$ dependence strictly gets stronger, from $\gamma \sim n^{-1/3}$ for Lipschitz regression to $\gamma \sim n^{-4/5}$ for Convex-Lipschitz to $\gamma \sim n^{-\frac{2\alpha }{1+2\alpha}}$ for general $\alpha$-H\"older smooth with $\alpha >2$.

\textbf{Statistical rates for mixed linear least squares:} We now focus on a special case of heterogeneous data, where we do not assume \eqref{eq:same_model} to be true,  and do a finer analysis of the well-known mixed linear regression problem (Section~\ref{sec:mix_of_hyperplanes}). Unlike past works on mixed linear regression, which focuses on algorithmic (Alternating Minimization(AM) or Expectation Maximization (EM)) solutions, our focus here is to obtain \emph{excess risk} (generalization) result which is agnostic to any learning algorithms. We consider the setup where each base distribution $\mathcal{D}_j$ for each $j \in [m]$, corresponds to a different linear regressor, and we aim to learn a global model that minimizes a population loss (where the expectation is taken with respect to $\Tilde{\mathcal{D}}$). Notice that this framework is different that that of the special case of linear regression in Section~\ref{sec:non_param}, where the covariates are drawn from mixture distribution but we are only one regressor in the problem. Our objective is to show here that heterogeneity (which is due to model mismatch here)
affects the excess error rate. We characterize this in terms of $\gamma$, and moreover obtain ranges of $\gamma$, where this sample complexity (order-wise) matches to that of any base distribution.

\vspace{-2.5mm}
\subsection{Other related work}
\label{sec:related_work}
\vspace{-1.5mm}
In a supervised setting, heterogeneous structure like mixture of linear regression (e.g.~\cite{chaganty2013spectral,li2018learning, yi2014alternating,yi2016solving}) have received considerable interest. Several algorithms are proposed and analyzed addressing the problem of parameter recovery in mixtures. Typically two types of algorithms are used for these kind of problems : (a) Alternating Minimization (AM) (or its soft variant Expectation Maximization (EM))  \cite{balakrishnan2017statistical,klusowski2019estimating,yi2014alternating,max_affine} or (b) gradient descent \cite{li2018learning}. AM type algorithms are more attractive since they can converge in a super-linear speed \cite{ghosh2020alternating}. Very recently, in \cite{ghosh_mixed}, the authors propose a non-generative framework for mixed linear regression, and use mixed data framework for prediction in supervised learning. Using a min-loss, the paper uses a gradient-AM algorithm to find multiple linear regressors in the given data.

\noindent \emph{Notation:} We use $[r]$  to represent the set $\{1,\ldots,r\}$ and $\|.\|$ for $\ell_2$ norm for a vector, unless otherwise specified.  
 Moreover $a \lesssim b$ implies $a \leq Cb$ for some constant $C>0$, and $a \gtrsim b$ implies $a \geq Cb$. Also, we use $\Tilde{\mathcal{O}}(.)$ to ignore logarithmic factors. A random vector $x\in \R^d$  with $\E[x] = \mu$ is $\sigma^2$-sub-Gaussian if  it satisfies $\E[\exp(\ip{u}{x - \mu})] \leq \exp(-\norm{u}^2 \sigma^2/2)$. If $d=1$, then $x$ is a $\sigma^2$-sub-Gaussian random variable.

\section{Generalization with Mixture Data}
\label{sec:gen}
In this section, we provide generalization guarantees where the data is sampled from a mixture distribution. We adhere to the standard PAC setup of learning theory. 

Define the composite class $\cH = \ell \circ \mathcal{F} = \{ (x,y) \rightarrow \ell(f(x),y) : f \in \mathcal{F}\}$. It turns out that the generalization error is connected to a complexity measure, Rademacher complexity \cite[Chap. 4]{wainwright2019high}, which we now study.

\noindent\emph{Rademacher Complexity with Mixtures:}
In order to obtain generalization error guarantees on $\cH$, Rademacher complexity is a standard tool. In particular, with $n$ samples drawn from distribution $\Tilde{\cD}$, the empirical and population Rademacher complexity of $\cH$~\cite{mohri2018foundations,wainwright_2019} is defined as, 
\begin{align*} \small
    \rad_{n}(\cH) = \frac{1}{n}\E_{\mathbf{\sigma}}[\sup_{h \in \cH}  |\sum_{i=1}^{n} \sigma_i h(x_i,y_i)  | ], \radpop_{n}(\cH) = \mathbb{E}_{\Tilde{\cD}} \rad_{n}(\cH) .
\end{align*}
\normalsize where $\mathbf{\sigma}_i's$ are a set of independent Rademacher RV's. We can similarly define the Rademacher complexities for the base distributions $\{\cD_j\}$ of $\tilde{\cD}$, where for the $j$-th distribution the expectation is taken (population) or the samples are drawn (empirical) from  $\cD_j$.
\subsection{Rademacher complexity bound for mixtures}
In this section, we obtain Rademacher complexity bounds for mixture distribution $\Tilde{\cD}$. In particular we assume the mixture distribution $\Tilde{\cD}$ to be $\gamma$-heterogeneous (Definition~\ref{assump:tv}). With this, we have the following result.
\begin{proposition}
\label{lem:rad_mix}
Suppose  $\cH$ denotes the hypothesis class and let $B(n)$ denotes the upper bound on $\rad_{n}(\cH)$. The population Rademacher complexity satisfies
$
    \radpop_{n}(\cH) \leq \min_{j \in [m]}\big(\radpopj_{n}(\cH) + 2 \gamma_j B(n)\Big)
$,  where $\radpopj_{n}(\cH)$ denotes the Rademacher complexity assuming the data is drawn (in i.i.d fashion) from  $\cD_j$ for all $j \in [m]$.
\end{proposition}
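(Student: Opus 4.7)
My approach would be to view $\phi(z_1,\ldots,z_n) := \rad_n(\cH)$ as a function of the sample path and combine its McDiarmid-type bounded-differences structure with a maximal coupling between $\tilde{\cD}$ and $\cD_j$ to relate $\radpop_n(\cH)$ and $\radpopj_n(\cH)$. By definition $\radpop_n(\cH) = \E_{\tilde{\cD}^n}\phi$ and $\radpopj_n(\cH) = \E_{\cD_j^n}\phi$, so it suffices to bound the difference in expectations under the two product laws for each fixed $j$ and then take the minimum.

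The first concrete step is to verify that $\phi$ is $\tfrac{2B(n)}{n}$-Lipschitz in each coordinate in the swap-one-sample sense: changing $z_k$ to $z_k'$ with all other coordinates fixed perturbs $\phi$ by at most $\tfrac{2B(n)}{n}$. This uses the standard inequality $|\sup_h|A_h|-\sup_h|B_h||\le \sup_h|A_h-B_h|$ inside the Rademacher expectation, with $B(n)$ playing the role of a uniform envelope on $|h|$ across $\cH$ (consistent with its being an upper bound on $\rad_n(\cH)$).

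The second step is to invoke the maximal coupling characterization of total variation: for each $i$ there is a joint law of $(Z_i, Z_i')$ with marginals $\tilde{\cD}$ and $\cD_j$ and coincidence probability $\Pr[Z_i = Z_i'] = 1-\gamma_j$. Taking $n$ independent copies coordinate-wise, the bounded-differences property along the coupled path yields
\begin{align*}
\radpop_n(\cH) - \radpopj_n(\cH) &= \E\bigl[\phi(Z_1,\ldots,Z_n) - \phi(Z_1',\ldots,Z_n')\bigr] \\
&\leq \sum_{i=1}^n \tfrac{2B(n)}{n}\,\Pr[Z_i \neq Z_i'] \\
&= 2\gamma_j B(n).
\end{align*}
Taking the minimum over $j\in[m]$ gives the claim.

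The main obstacle, in my view, is the reconciliation of the single-sample TV gap $\gamma_j$ with the product-measure transport. A naive estimate of the form $\|\phi\|_\infty\le B(n)$ together with the subadditive bound $\|\tilde{\cD}^n - \cD_j^n\|_{TV}\le n\gamma_j$ yields only the weaker $2nB(n)\gamma_j$. The coupling-plus-bounded-differences argument is essential because the $n$ factor arising from the product TV estimate is exactly canceled by the $1/n$ normalization inside the Rademacher average, producing the stated $2\gamma_j B(n)$ bound. The verification that the per-coordinate Lipschitz constant really is $2B(n)/n$ (interpreting $B(n)$ also as a uniform sup-norm bound on class members, which is valid since $\rad_n(\cH)\le B(n)$ forces $|h|\le B(n)$ on the support for all $h$ in the competitive subset) is the one place where one must be careful.
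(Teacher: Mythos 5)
Your high-level plan (a coordinate-wise maximal coupling to avoid the factor of $n$ coming from the product-measure total variation) is sensible, but the step the whole argument rests on is wrong. The swap-one-sample perturbation of $\phi=\rad_n(\cH)$ satisfies
\[
\bigl|\phi(\ldots,z_k,\ldots)-\phi(\ldots,z_k',\ldots)\bigr|\;\le\;\frac{1}{n}\,\E_{\sigma}\sup_{h\in\cH}\bigl|\sigma_k\bigl(h(z_k)-h(z_k')\bigr)\bigr|\;\le\;\frac{2}{n}\sup_{h\in\cH}\norm{h}_\infty,
\]
so the per-coordinate bounded-differences constant is $2b/n$ with $b$ a uniform \emph{envelope} on the class, not $2B(n)/n$. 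Your justification that ``$\rad_n(\cH)\le B(n)$ forces $|h|\le B(n)$'' is false: the Rademacher average exploits cancellation and is generically far smaller than the envelope. The paper's own Case I is a counterexample: there $B(n)=W_2R/\sqrt{n}$ while $|h(x)|=|\langle w,x\rangle|$ can be as large as $W_2R$, so your coupling argument delivers $2\gamma_j W_2R=2\sqrt{n}\,\gamma_j B(n)$, a factor $\sqrt{n}$ worse than the claim. (It does give exactly the stated bound for Case III, where $B(n)$ happens to coincide with the envelope.)

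For comparison, the paper does not argue through per-coordinate Lipschitzness at all: it bounds $\radpop_n(\cH)-\radpopj_n(\cH)\le\int|\rad_n(\cH)|\,|\mathsf{d}P_{\tilde{\cD}}-\mathsf{d}P_{\cD_j}|$ and uses $|\rad_n(\cH)|\le B(n)$ as a bound on the \emph{whole functional}, which is where $B(n)$ legitimately enters. The obstacle you flagged is nonetheless real: $\rad_n(\cH)$ depends on all $n$ samples, so for that integral to equal $2\gamma_j B(n)$ one needs the total variation between the joint laws of the $n$ samples to be $\gamma_j$, whereas Definition~\ref{assump:tv} defines $\gamma_j$ at the single-sample level and the product-measure TV can be as large as $n\gamma_j$. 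If you pursue the coupling route, the honest version is to let $S$ be the set of miscoupled coordinates (with $\E|S|=n\gamma_j$) and bound the perturbation by a Rademacher-type average over the $|S|$ differing points rather than by $|S|$ times a single-coordinate constant; working out what that yields for, say, $B(k)\propto 1/\sqrt{k}$ is the right next step.
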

So, it is clear that in order to control the rademacher complexity of the mixture, one requires to upper-bound the empirical rademacher complexity $\rad_{n}(\cH)$ (i.e., $B(n)$). We now show some special cases where this can be done.

\emph{Case I: Linear $\ell_2$-regularized Hypothesis class:} We consider the hypothesis class $\mathcal{H}=\{ h(x) = \langle w,x \rangle| \|w\| \leq W_2, \|x\| \leq R \}$. We have $\rad_{n}(\cH) \leq \frac{W_2 R}{\sqrt{n}}$ and hence $B(n) = \frac{W_2 R}{\sqrt{n}}$.

\emph{Case II: Linear $\ell_1$-regularized Hypothesis class:} We consider the hypothesis class as $\cH =\{ h(x) = \langle w, x \rangle | \|w\|_1 \leq W_1, \|x\|_\infty \leq X_\infty \}$. With this we have $\rad_n(\mathcal{H}) \leq X_\infty W_1\sqrt{\frac{2 \log \, d }{n}} = B(n)$.

\emph{Case III: Bounded Hypothesis class:} We now consider $\mathcal{H}=\{h(.)| \|h(.)| \leq b \}$. Here, we obtain $B(n) = b$.

Proof of these cases are deferred to the Appendix.
The above statement connects the Rademacher complexity with mixture samples to the same with samples drawn from a particular base distribution (here $j$-th distribution, $\cD_j$). This reduction is often useful since it is easy to work with base distributions compared to a mixture.
\begin{corollary}
Let $B(n)$ be an upper bound on empirical Rademacher complexity of $\mathcal{H}$. If for any $j \in [m]$, $\gamma_j \leq \frac{\radpopj_{n}(\cH)}{2B(n)}$, the Rachemacher complexity with mixture distribution $\tilde{\cD}$ satisfies $\radpop_{n}(\cH) \leq 2\radpopj_{n}(\cH)$. 
\end{corollary}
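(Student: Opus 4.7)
The plan is to apply Proposition~\ref{lem:rad_mix} directly as a black box; the corollary is essentially a threshold reformulation of the bound proved there. Proposition~\ref{lem:rad_mix} already gives us $\radpop_{n}(\cH) \leq \min_{j \in [m]}\bigl(\radpopj_{n}(\cH) + 2\gamma_j B(n)\bigr)$, so all that remains is to show that the hypothesized inequality on $\gamma_j$ makes the second term in the sum no larger than the first.

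Concretely, I would first fix an index $j \in [m]$ for which the stated hypothesis $\gamma_j \leq \tfrac{\radpopj_{n}(\cH)}{2B(n)}$ holds. Multiplying both sides by $2B(n)$ (which is positive, since $B(n)$ is an upper bound on a nonnegative Rademacher complexity) gives $2\gamma_j B(n) \leq \radpopj_{n}(\cH)$. Next, I would invoke the Proposition~\ref{lem:rad_mix} bound for this particular $j$, dropping the minimum, to obtain $\radpop_{n}(\cH) \leq \radpopj_{n}(\cH) + 2\gamma_j B(n)$. Combining these two inequalities immediately yields $\radpop_{n}(\cH) \leq 2\,\radpopj_{n}(\cH)$, which is the claim.

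There is no real obstacle here; the only mild subtlety is that one should verify the statement is vacuous/consistent when $B(n) = 0$ (in which case $\radpopj_n(\cH) = 0$ as well and both sides are zero), and note that the conclusion is stated for a single qualifying $j$ but the hypothesis only requires existence of one such index. Because the proof does not re-engage with the construction inside Proposition~\ref{lem:rad_mix} (the TV-based coupling argument), the corollary should be stated and proved in a few lines once that proposition is in hand.
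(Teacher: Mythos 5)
Your proof is correct and matches the paper's (implicit) argument exactly: the corollary is an immediate consequence of Proposition~\ref{lem:rad_mix}, obtained by fixing the qualifying index $j$, dropping the minimum, and substituting $2\gamma_j B(n) \leq \radpopj_{n}(\cH)$. The paper treats this as immediate and offers no separate proof, so there is nothing further to reconcile.
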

\textbf{Discussion:} Note that based on the decay of $\radpopj_{n}(\cH)$ with $n$, we may have different tolerable values of $\gamma$. For example, if $\radpopj_{n}(\cH) \asymp B(n)$, we may have $\gamma_j = \mathcal{O}(1)$.  

The above corollary allows us to connect the generalization (upper) bounds of a mixture learning to the same where data is drawn from a single distribution. 
\subsection{Implications for Generalization error}
We continue with the framework, where data-points $\{x_i,y_i\}_{i=1}^n$ are drawn from a $\gamma$-heterogeneous distribution $\Tilde{\mathcal{D}}$, and let $\cD_j$ denote the $j$-th base distribution. We have the following:
\begin{theorem}
\label{lem:gen_err_mix}
Suppose the composite class $\mathcal{H}$ consists of $B'$-bounded functions such that $\rad_{n}(\cH) \leq B(n)$. Then, with probability at least $1-\exp(-\frac{n\delta^2}{2(B')^2})$, we have $\sup_{f\in \mathcal{F}}|L_n(f) - L(f)| \leq 2 \radpopj_{n}(\cH) + 4B(n)\,\gamma_j + \delta$ for any $j \in [m]$. Furthermore, if  for any $j \in [m], \gamma_j \leq \frac{\radpopj_{n}(\cH)}{2B(n)}$, the generalization error is $\mathcal{O}(\radpopj_{n}(\cH))$ with high probability.
\end{theorem}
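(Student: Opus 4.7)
The plan is to combine two ingredients: (i) the classical symmetrization plus bounded-difference (McDiarmid) argument that controls uniform deviations in terms of the population Rademacher complexity of the mixture, and (ii) Proposition~\ref{lem:rad_mix}, which lets us swap out the mixture Rademacher complexity for that of any base distribution at an additive cost of $2\gamma_j B(n)$.

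First I would apply the standard symmetrization inequality for i.i.d.\ samples (here the i.i.d.\ distribution is $\tilde{\cD}$, treated as a single distribution) to obtain $\E \sup_{f \in \cF} |L_n(f) - L(f)| \le 2 \,\radpop_n(\cH)$. Next, since every $h \in \cH$ is $B'$-bounded, changing a single sample $(x_i,y_i)$ alters $\sup_{f} |L_n(f) - L(f)|$ by at most $2B'/n$, so McDiarmid's bounded-difference inequality gives that with probability at least $1 - \exp(-n\delta^2 / (2(B')^2))$,
\begin{equation*}
\sup_{f \in \cF} |L_n(f) - L(f)| \;\le\; 2\,\radpop_n(\cH) + \delta.
\end{equation*}
(One has to be slightly careful with the constant in the McDiarmid step; the bounded-difference constant is $c_i = 2B'/n$, and $\sum_i c_i^2 = 4(B')^2/n$, which yields the exponent stated in the theorem after the usual rescaling.)

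Next I would invoke Proposition~\ref{lem:rad_mix}, which states $\radpop_n(\cH) \le \radpopj_n(\cH) + 2\gamma_j B(n)$ for every $j \in [m]$. Substituting this into the high-probability bound above immediately yields
\begin{equation*}
\sup_{f \in \cF} |L_n(f) - L(f)| \;\le\; 2\,\radpopj_n(\cH) + 4\gamma_j B(n) + \delta,
\end{equation*}
which is the first claim. For the second claim, if $\gamma_j \le \radpopj_n(\cH)/(2B(n))$, then $4\gamma_j B(n) \le 2\,\radpopj_n(\cH)$, so the right-hand side is at most $4\,\radpopj_n(\cH) + \delta$. Choosing $\delta$ of the same order as $\radpopj_n(\cH)$ (which is permitted since $\radpopj_n(\cH)$ typically decays no faster than $n^{-1/2}$ while the failure probability decays exponentially in $n\delta^2$) yields the announced $\cO(\radpopj_n(\cH))$ rate with high probability.

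There is no real obstacle; the only subtle point is verifying that the mixture samples can be treated as i.i.d.\ draws from the single distribution $\tilde{\cD}$ so that the standard symmetrization and McDiarmid tools apply verbatim—this is legitimate because although $\tilde{\cD}$ is a mixture, the $n$ draws are themselves i.i.d.\ from $\tilde{\cD}$. All the heterogeneity has been absorbed into the Rademacher complexity bound of Proposition~\ref{lem:rad_mix}, so the remainder of the argument is just bookkeeping.
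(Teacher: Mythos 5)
Your proposal is correct and follows essentially the same route as the paper: the paper likewise invokes the standard symmetrization-plus-concentration bound $\sup_{f\in\mathcal{F}}|L_n(f)-L(f)|\leq 2\mathfrak{R}_n(\mathcal{H})+\delta$ (citing Wainwright's Theorem 4.10, which packages the McDiarmid step you spell out) and then substitutes Proposition~\ref{lem:rad_mix}. Your explicit verification of the bounded-difference constant $c_i = 2B'/n$ and of the choice $\delta \asymp \mathfrak{R}_n^{(j)}(\mathcal{H})$ for the second claim is a welcome bit of extra care, but it is not a departure from the paper's argument.
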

We compare this with the homogeneous setup where datapoints $\{x_i,y_i\}_{i=1}^n$ are drawn from the $j$-th base distribution $\mathcal{D}_j$. In that case, using a lower bound on generalization error \cite[Chap.4]{wainwright2019high}, we have
\begin{align*}
  \frac{1}{2} \radpopj_{n}(\cH)  - \frac{B(n)}{2\sqrt{n}} - \delta \,\,\, \leq \sup_{f\in \mathcal{F}}|L_n^{(j)}(f) - L^{(j)}(f)| \leq \,\,\, 2 \radpopj_{n}(\cH)  + \delta,
\end{align*}
with probability at least $1-2\exp(-\frac{n\delta^2}{2(B')^2})$, where $L_n$ and $L$ are empirical and population loss functions with distribution $\mathcal{D}_j$. The above implies that the generalization error is concentrated (upper and lower bounds) at $\Theta (\radpopj_{n}(\cH))$. Hence, we have following conclusion.
\begin{corollary}
If for any $j \in [m]$, $\gamma_j \leq  \frac{\radpopj_{n}(\cH)}{2B(n)}$, then, with high probability, we have 
$\sup_{f\in \mathcal{F}}|L_n(f) - L(f)| \lesssim \sup_{f\in \mathcal{F}}|L_n^{(j)}(f) - L^{(j)}(f)|$.
\end{corollary}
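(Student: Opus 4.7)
The plan is to deduce the corollary directly by chaining the upper bound from Theorem~\ref{lem:gen_err_mix} (applied to the mixture) with the lower bound stated immediately above the corollary (applied to the homogeneous base distribution $\cD_j$), under the standing hypothesis $\gamma_j \leq \radpopj_n(\cH)/(2B(n))$.

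First, I would invoke Theorem~\ref{lem:gen_err_mix} with the given $j$. The hypothesis $\gamma_j \leq \radpopj_n(\cH)/(2B(n))$ yields $4 B(n)\,\gamma_j \leq 2\radpopj_n(\cH)$, so the theorem gives, with probability at least $1-\exp(-n\delta^2/(2(B')^2))$,
\[
\sup_{f\in\mathcal{F}} |L_n(f)-L(f)| \;\leq\; 4\,\radpopj_n(\cH) + \delta.
\]
Picking, say, $\delta \asymp \radpopj_n(\cH)$ (or any $\delta$ of that order that still gives a nontrivial probability, recalling that $B'$ is a fixed boundedness constant and $\radpopj_n(\cH)$ typically decays polynomially in $n$) renders the right-hand side $\mathcal{O}(\radpopj_n(\cH))$.

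Second, I would apply the two-sided bound recalled in the paragraph preceding the corollary to the homogeneous setting on $\cD_j$: on an event of probability at least $1-2\exp(-n\delta^2/(2(B')^2))$,
\[
\sup_{f\in\mathcal{F}} |L_n^{(j)}(f)-L^{(j)}(f)| \;\geq\; \tfrac{1}{2}\radpopj_n(\cH) - \tfrac{B(n)}{2\sqrt{n}} - \delta.
\]
Since the empirical Rademacher complexity satisfies $\rad_n(\cH) \leq B(n)$ only when $B(n)$ is a meaningful bound (in the standard regimes it is at least of order $\radpopj_n(\cH)$ up to $1/\sqrt{n}$ factors), the term $B(n)/\sqrt{n}$ is lower order compared to $\radpopj_n(\cH)$; choosing the same $\delta \asymp \radpopj_n(\cH)$ with a small enough constant keeps the right-hand side at $\Omega(\radpopj_n(\cH))$.

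Finally, I would intersect the two high-probability events via a union bound and divide the chained inequalities, obtaining
\[
\sup_{f\in\mathcal{F}} |L_n(f)-L(f)| \;\lesssim\; \radpopj_n(\cH) \;\lesssim\; \sup_{f\in\mathcal{F}} |L_n^{(j)}(f)-L^{(j)}(f)|,
\]
which is the claimed inequality. I do not anticipate a serious obstacle here, since the corollary is essentially a rearrangement of Theorem~\ref{lem:gen_err_mix} together with the classical matching lower bound on Rademacher-based generalization. The only subtle point is ensuring that the lower-bound slack $B(n)/\sqrt{n}$ is genuinely dominated by $\radpopj_n(\cH)$; this needs to be noted as a mild regularity assumption on the class (satisfied in all of Cases I--III discussed in the paper), otherwise the conclusion should be stated up to an additive $B(n)/\sqrt{n}$ correction.
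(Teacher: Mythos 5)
Your proposal is correct and follows essentially the same route as the paper: the corollary is obtained by combining the upper bound of Theorem~\ref{lem:gen_err_mix} (which becomes $\mathcal{O}(\radpopj_{n}(\cH))$ under the stated condition on $\gamma_j$) with the two-sided homogeneous bound displayed just before the corollary, whose lower half gives $\sup_{f\in\mathcal{F}}|L_n^{(j)}(f)-L^{(j)}(f)| = \Omega(\radpopj_{n}(\cH))$. Your added remark that the slack term $B(n)/(2\sqrt{n})$ must be dominated by $\radpopj_{n}(\cH)$ for the lower bound to be nontrivial is a fair point that the paper leaves implicit.
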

\begin{remark}
Provided $\gamma_j$ is small enough, the generalization error,  obtained from mixture data $\Tilde{\mathcal{D}}$ is (order-wise) no-worse than the generalization error yielded from learning via a homogeneous base distribution $\mathcal{D}_j$.The simple calculations above characterizes a level of heterogeneity below which the mixture distribution may be treated like a single distribution. With this, we now have a sufficient condition about when to treat a mixture distribution as a single base distribution.
\end{remark}

\section{Statistical Rates of Mixed Data Least Squares}
\label{sec:non_param}
\vspace{-1mm}
In this section, we characterize the performance of least squares estimation problems on a large class of functions including non-parametric functions when the data points are sampled from a mixture distribution $\tD$. Let us formalize the setup first. The learner is given access to data points $(x_i,y_i)_{i=1}^n$, where the covariates, $\{x_i\} \sim \tD$. We adhere to the standard model where the labels $y_i$ comes from the following model: 
$ y_i = f^*(x_i) + \xi_i $
for $i=1,2,\ldots,n$, where $f^*(.) \in \mathcal{F}$ is the unknown regressor function, and $\xi_i \sim \mathcal{N}(0,\zeta^2)$. The goal here is to estimate $f^*$ through $n$ samples. A natural estimator (which is also the maximum likelihood estimator in this setting) is the least squares estimator given by $\hf \in \mathrm{argmin}_{f \in \mathcal{F}} \lbrace \frac{1}{n}\sum_{i=1}^n \left(y_i - f(x_i) \right)^2 \rbrace$. A standard metric to characterize the quality of the estimator is \emph{in-sample} prediction error given by
\vspace{-1mm}
\begin{align*} \small
    \frac{1}{n} \sum_{i=1}^n \left( \hf(x_i) - f^*(x_i) \right)^2 \equiv \| \hf -f^*\|^2_n,
    \end{align*} \normalsize
where we use  $\|.\|_n$ for shorthand. A key quantity to obtain bounds on $\| \hf -f^*\|_n$ is the empirical and the population \emph{local} Gaussian complexity, given respectively by \small
\begin{align*}
    \gau_n (\delta, \mathcal{F}^*) = \E_w [ \sup_{g \in \mathcal{F}^*, \|g\|_n \leq \delta}  |\frac{1}{n} \sum_{i=1}^n u_i g(x_i) | ], 
    \gaupop_n (\delta, \mathcal{F}^*) = \E_{\tD} \gau_n (\delta, \mathcal{F}^*),
\end{align*} \normalsize
where $\mathcal{F}^*$ denotes the shifted function class given by $\mathcal{F}^* = \{ f-f^*| f\in \mathcal{F}\}$, $u_i\sim \cN(0,1)$, and $\delta$ is the local radius. 

We first characterize the (population) local Gaussian complexity when the data points are drawn from the mixture distribution $\tD$. The tightest bounds on prediction error can be bounded using local Gaussian complexity with the local radius $\delta$ satisfying a specific critical equation~\cite[Eq (13.17)]{wainwright2019high}. Note that obtaining upper bounds directly on $\gaupop_n(\delta,\mathcal{F}^*)$ is quite non-trivial since the data is coming from a mixture distribution. So, we first connect the Gaussian complexity of a mixture to that of an individual base distribution.
\begin{proposition}
\label{lem:gauss_com_mix}
Suppose $\tD$ is $\gamma$-heterogeneous (Definition~\ref{assump:tv}) with $j$-th base distribution $\cD_j$. We have $\gaupop_n(\delta,\mathcal{F}^*) \leq \min_{j\in [m]}\big(\gaupopj_n(\delta,\mathcal{F}^*) + 2\zeta \delta \gamma_j \big),$ where $\gaupopj_n(\delta,\mathcal{F}^*)$ denotes the local Gaussian complexity with data drawn from $ \cD_j$ for any $j \in [m]$.
\end{proposition}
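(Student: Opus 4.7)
My plan is to adapt the argument of Proposition~\ref{lem:rad_mix} (its Rademacher analogue) to the local Gaussian complexity. The workflow has two steps: (i) produce a deterministic, sample-wise upper bound on $\gau_n(\delta,\mathcal{F}^*)$ that scales as $\zeta\delta$ --- this plays the role of the uniform bound $B(n)$ in the Rademacher version; and then (ii) use a total-variation argument to replace $\E_{\tilde{\cD}}$ by $\E_{\cD_j}$ at the cost of a $2\zeta\delta\gamma_j$ penalty.

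For step (i), I would apply Cauchy--Schwarz to the linear form $g \mapsto \frac{1}{n}\sum_i u_i g(x_i)$ under the constraint $\|g\|_n \leq \delta$. For any realization of $(x_1,\dots,x_n)$,
\[
\sup_{g\in\mathcal{F}^*,\,\|g\|_n\leq\delta}\Bigl|\tfrac{1}{n}\sum_{i=1}^n u_i g(x_i)\Bigr|\ \leq\ \frac{\|u\|_2}{\sqrt{n}}\,\delta.
\]
Taking expectation over the Gaussian multipliers $u_i$ (with standard deviation $\zeta$, matching the regression noise level) and using Jensen on $\E\|u\|_2 \leq \zeta\sqrt{n}$ yields $\gau_n(\delta,\mathcal{F}^*) \leq \zeta\delta$ uniformly in the covariates. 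For step (ii), I would write $\gaupop_n(\delta,\mathcal{F}^*) = \E_{\tilde{\cD}}\gau_n(\delta,\mathcal{F}^*)$ and apply the standard inequality $|\E_P F - \E_Q F| \leq 2\|P-Q\|_{TV}\,\|F\|_\infty$ with $P = \tilde{\cD}$, $Q = \cD_j$ and $F = \gau_n(\delta,\mathcal{F}^*)$. Plugging the uniform bound from step (i) into $\|F\|_\infty$ gives
\[
\gaupop_n(\delta,\mathcal{F}^*)\ \leq\ \gaupopj_n(\delta,\mathcal{F}^*) + 2\zeta\delta\,\gamma_j,
\]
and taking $\min_{j\in[m]}$ completes the proof.

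The main obstacle is ensuring the TV step picks up only $\gamma_j$ rather than a product-measure penalty of order $n\gamma_j$. The trick parallels the one behind Proposition~\ref{lem:rad_mix}: since $\gau_n$ is symmetric in the samples and the uniform bound $\zeta\delta$ is $n$-free, a sample-by-sample coupling of $\tilde{\cD}$ with $\cD_j$ (where the two agree with probability $1-\gamma_j$ per coordinate) should reduce the comparison to the single-sample TV distance. Beyond this bookkeeping, the proof is a straight transplant of the Rademacher argument, with $\zeta\delta$ in place of $B(n)$.
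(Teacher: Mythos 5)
Your argument is essentially the paper's own (omitted) proof: the paper reduces Proposition~\ref{lem:gauss_com_mix} to the argument of Proposition~\ref{lem:rad_mix} plus a ``standard Gaussian norm bound,'' which is precisely your two steps --- Cauchy--Schwarz and $\E\norm{u}\le \zeta\sqrt{n}$ give the uniform bound $\gau_n(\delta,\mathcal{F}^*)\le \zeta\delta$ playing the role of $B(n)$, after which the total-variation change of measure costs $2\zeta\delta\gamma_j$. One remark on your closing caveat: the obstacle you identify is genuine, but the per-coordinate coupling you sketch does not dispose of it, since the coupled $n$-tuples agree simultaneously only with probability $(1-\gamma_j)^n$, and neither the symmetry of $\gau_n$ in the samples nor the $n$-freeness of the uniform bound improves this, so that route yields a penalty of order $n\gamma_j\,\zeta\delta$ rather than $\gamma_j\,\zeta\delta$. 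The paper's own one-line inequality $\int \abs{\gau_n}\,\abs{\mathsf{d}P_{\tD}-\mathsf{d}P_{\cD_j}}\le 2 B(n)\gamma_j$ makes the same silent identification of the $n$-fold product TV distance with the single-sample $\gamma_j$, so your write-up is faithful to (and more candid than) the paper's, but neither argument as stated justifies the $n$-free constant.
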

The above bound is particularly interesting since it upper bounds the local Gaussian complexity of a mixture with one of the base distributions $\cD_j$, and we may use standard tools from empirical process theory to upper bound the Gaussian complexity with distribution $\cD_j$.
\begin{corollary}
For a $\gamma$-heterogeneous mixture, if for any $j \in [m]$, $\gamma_j \leq \frac{\gaupopj_n(\delta,\mathcal{F}^*)}{2\zeta \delta}$, then the local Gaussian complexity satisfy
$
    \gaupop_n (\delta,\mathcal{F}^*) \leq 2 \gaupopj_n(\delta,\mathcal{F}^*).
$
\end{corollary}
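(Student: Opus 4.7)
The plan is to read off the corollary as a direct substitution into Proposition~\ref{lem:gauss_com_mix}. From that proposition, I already have, for every $j \in [m]$,
\[
\gaupop_n(\delta,\mathcal{F}^*) \;\leq\; \gaupopj_n(\delta,\mathcal{F}^*) + 2\zeta\,\delta\,\gamma_j.
\]
Under the hypothesis $\gamma_j \leq \gaupopj_n(\delta,\mathcal{F}^*)/(2\zeta\delta)$, I bound $2\zeta\delta\gamma_j \leq \gaupopj_n(\delta,\mathcal{F}^*)$ and conclude $\gaupop_n(\delta,\mathcal{F}^*) \leq 2\,\gaupopj_n(\delta,\mathcal{F}^*)$. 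That is the entire argument.

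Since the proposition is stated with a $\min_{j\in[m]}$ on the right-hand side, I should clarify the logical step: the hypothesis of the corollary is ``for any $j \in [m]$'', which I read in its existential sense (there exists such a $j$); for that particular index the upper bound above yields $2\,\gaupopj_n(\delta,\mathcal{F}^*)$, and since $\min_{j\in[m]}$ only tightens this, the corollary follows. If the statement were instead meant to hold for \emph{every} $j$, then the same inequality holds index by index, and in particular for the one that attains the minimum.

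There is no real obstacle here because the heavy lifting (comparing $\gaupop_n$ to $\gaupopj_n$ via a change-of-measure or coupling argument that produces the $2\zeta\delta\gamma_j$ slack) was already carried out in the proof of Proposition~\ref{lem:gauss_com_mix}. The only substantive choice is presentation: either write a two-line algebraic deduction, or include a short sentence interpreting $\gamma_j \leq \gaupopj_n(\delta,\mathcal{F}^*)/(2\zeta\delta)$ as the threshold beyond which the heterogeneity slack ceases to dominate the intrinsic complexity of $\mathcal{F}^*$ under $\cD_j$. This parallels the analogous Rademacher corollary earlier in the paper and will be invoked downstream when solving the critical inequality for the local radius $\delta$ in the non-parametric rate calculations.
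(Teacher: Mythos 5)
Your proposal is correct and matches the paper's (implicit) argument exactly: the corollary is an immediate substitution of the hypothesis $\gamma_j \leq \gaupopj_n(\delta,\mathcal{F}^*)/(2\zeta\delta)$ into the bound $\gaupop_n(\delta,\mathcal{F}^*) \leq \gaupopj_n(\delta,\mathcal{F}^*) + 2\zeta\delta\gamma_j$ from Proposition~\ref{lem:gauss_com_mix}, and your reading of the ``for any $j$'' quantifier is the intended one. Nothing further is needed.
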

\vspace{-1.5mm}
\subsection{Special Cases}
\label{sec:special_cases}

\noindent\emph{(Parametric) Linear Regression:}
We consider the function class $\mathcal{F}_{Lin} = \{ f:\R^d \rightarrow \R |  \hspace{1mm} f(x) = \langle x, \theta \rangle, \hspace{1mm} \theta \in \R^d  \}$. We compare with the setup where $\{x_i\} \sim \mathcal{D}_j$ for any $j \in [m]$. From standard results on  learning theory (see \cite{wainwright2019high} ), provided $n \geq d$, we obtain
\begin{align*}
   \frac{\zeta^2 d}{n} \lesssim  \|\hf^{(j)} - f^*\|^2_n \lesssim  \frac{\zeta^2 d}{n} \quad \text{w.p.} \hspace{2mm} 1-2\exp  (-n/\zeta^2),
\end{align*}
where $\hf^{(j)}$ is the empirical regressor when $\{x_i\}_{i=1}^n \sim \mathcal{D}_j$. Here, the lower bound is in the minimax sense. Hence, the statistical rate is tight around $\Theta(\frac{\zeta^2 d}{n} )$.
\begin{theorem}
\label{thm:lin_nonpara}
Suppose $f^* \in \mathcal{F}_{Lin}$ and data $\{x_i\}_{i=1}^n \sim \Tilde{\mathcal{D}} $. Provided, $n \geq d$, with probability at least $1-\exp  (-n/\zeta^2 )$, we have $\|\hf - f^*\|^2_n \lesssim (\delta^*)^2$, where $\delta^*$ is the solution of the critical equation $\zeta \delta \sqrt{d/n}  + 2\zeta \delta \gamma = \delta^2$. Correspondingly, provided $\gamma \leq \sqrt{d/n}$, we get $\|\hf - f^*\|^2_n \lesssim \frac{\zeta^2 d}{n}$, implying that $ \|\hf - f^*\|^2_n \lesssim \|\hf^{(j)} - f^*\|^2_n$.
\end{theorem}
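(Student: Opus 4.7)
The plan is to invoke the standard local Gaussian complexity machinery for least squares (as in Wainwright, Ch.~13), combine it with Proposition~\ref{lem:gauss_com_mix} to transfer from the mixture $\tD$ to a single base distribution $\cD_j$, and then plug in the well-known local Gaussian complexity bound for a $d$-dimensional linear class. Concretely, for the least squares estimator $\hf$ over $\mathcal{F}_{Lin}$, a standard argument (``basic inequality'' plus localization) yields $\|\hf - f^*\|_n^2 \lesssim (\delta^*)^2$ with high probability, where $\delta^*$ is any solution of a critical inequality of the form $\gaupop_n(\delta, \mathcal{F}^*) \lesssim \delta^2/\zeta$. So the entire game reduces to producing a sharp upper bound on $\gaupop_n(\delta, \mathcal{F}^*)$ when the covariates are mixture samples, and then solving the resulting critical equation.

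For the Gaussian complexity, I would apply Proposition~\ref{lem:gauss_com_mix} to get $\gaupop_n(\delta,\mathcal{F}^*) \leq \gaupopj_n(\delta,\mathcal{F}^*) + 2\zeta\delta\gamma_j$ for any $j \in [m]$. Since $\mathcal{F}_{Lin}$ is a linear subspace, the shifted class $\mathcal{F}^* = \{\langle \cdot, \theta - \theta^*\rangle : \theta \in \R^d\}$ is itself a $d$-dimensional linear class. For such a class, the local Gaussian complexity under a single distribution is bounded by a direct Cauchy--Schwarz / rank argument: $\sup_{\|g\|_n \le \delta} \frac{1}{n}\sum_i u_i g(x_i)$ reduces to projecting the Gaussian noise onto a $d$-dimensional subspace of $\R^n$ inside a radius-$\delta$ ball, giving $\gaupopj_n(\delta,\mathcal{F}^*) \lesssim \delta\sqrt{d/n}$. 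Substituting back yields $\gaupop_n(\delta,\mathcal{F}^*) \lesssim \delta\sqrt{d/n} + 2\zeta\delta\gamma$, which, when inserted into the critical inequality $\gaupop_n(\delta,\mathcal{F}^*) \lesssim \delta^2/\zeta$, is precisely $\zeta\delta\sqrt{d/n} + 2\zeta\delta\gamma \asymp \delta^2$, matching the equation in the statement.

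Solving this quadratic (in fact, linear in $\delta$ after cancellation) gives $\delta^* \asymp \zeta\big(\sqrt{d/n} + \gamma\big)$, and so $(\delta^*)^2 \lesssim \zeta^2 d/n + \zeta^2 \gamma^2$. Under the hypothesis $\gamma \le \sqrt{d/n}$, the mixture-induced second term is absorbed into the first, yielding $\|\hf - f^*\|_n^2 \lesssim \zeta^2 d/n$, which matches (order-wise) the minimax in-sample rate $\|\hf^{(j)} - f^*\|_n^2$ under any homogeneous base distribution $\cD_j$. The high-probability statement at level $1-\exp(-n/\zeta^2)$ then follows from the standard concentration of the suprema of the empirical process around $\gaupop_n(\delta^*,\mathcal{F}^*)$ used to derive the Wainwright Ch.~13 theorem.

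The main obstacle, in my view, is not the linear-class complexity (which is a textbook calculation) but cleanly justifying the critical-inequality machinery when covariates are drawn from the mixture $\tD$ rather than an i.i.d.\ homogeneous source: the proof of the prediction-error bound uses concentration of suprema over the localized class, and one must verify that the standard concentration inequalities (e.g.\ Talagrand-type) continue to apply under $\tD$. Since $\tD$ is still a product measure over the $n$ samples (the $x_i$ are i.i.d.\ from the mixture), this step goes through essentially verbatim, but the mixture structure must be carefully tracked so that the $\gamma$ dependence enters only through Proposition~\ref{lem:gauss_com_mix} and not additionally through the concentration step.
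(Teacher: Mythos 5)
Your proposal follows essentially the same route as the paper: bound the local Gaussian complexity of the shifted linear class under a base distribution by $\zeta\delta\sqrt{d/n}$, transfer to the mixture via Proposition~\ref{lem:gauss_com_mix} to obtain the critical equation $\zeta\delta\sqrt{d/n}+2\zeta\delta\gamma=\delta^2$, and solve it under $\gamma\le\sqrt{d/n}$ to recover the homogeneous rate $\zeta^2 d/n$ via the Wainwright Ch.~13 prediction-error theorem. Your closing observation that the localization/concentration machinery applies verbatim because the mixture samples are still i.i.d.\ (so $\tD^{\otimes n}$ is a product measure) is a point the paper leaves implicit, but it does not change the argument.
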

\begin{remark}
When $\gamma \leq \sqrt{d/n}$, the statistical rate is (order-wise) no-worse than that of a homogeneous linear regression.
\end{remark}
\textbf{Non-parametric Classes:} We now consider several non-parametric function classes. In particular we consider Lipschitz ($\mathcal{F}_{Lip}$), Convex-Lipschitz ($\mathcal{F}_{conv}$) and $\alpha$-H\"older smooth ($\mathcal{S}_{\alpha}$) which are increasingly complex.
We show that as this complexity increases, the requirement on $\gamma$ gets stringent. 

\begin{remark}
As parametric linear regression is not Lipschitz, so the results are not directly comparable with the non-parametric classes.
\end{remark}
\noindent\emph{Lipschitz Regression:}
We consider $\mathcal{F}_{Lip} = \{ f:[0,1] \rightarrow \R | f(0) = 0, \hspace{1mm} f \hspace{2mm} \text{is} \hspace{2mm} L\text{-Lipschitz} \}$. If $\{x_i\}_{i=1}^n \sim \mathcal{D}_j$ for any $j \in [m]$, from standard results on (homogeneous) non-parametric inference (see \cite{aditya_stat},\cite[Chap. 13]{wainwright2019high} ), we obtain
\begin{align*}
   ( \frac{L\zeta^2}{n} )^{2/3} \lesssim \|\hf^{(j)} - f^*\|^2_n \lesssim ( \frac{L\zeta^2}{n} )^{2/3} \end{align*}with probability $1-2\exp  (-n^{1/3}/2\zeta^{2/3} )$,
where the lower bound is in the minimax sense. Hence, the statistical rate is tightly bounded around $\Theta((\frac{L\zeta^2}{n} )^{2/3})$ for homogeneous Lipschitz regression.
\begin{theorem}
\label{thm:lip_nonpara}
Suppose $f^* \in \mathcal{F}_{Lip}$ and data $\{x_i\}_{i=1}^n \sim \Tilde{\mathcal{D}} $. With probability at least $1-\exp  (-n^{1/3}/2\zeta^{2/3} )$, we have $\|\hf - f^*\|^2_n \lesssim (\delta^*)^2$, where $\delta^*$ is the solution of the critical equation $\zeta\sqrt{L\delta/n} + 2\zeta \delta \gamma_j = \delta^2$. Correspondingly, provided $\gamma_j \leq \left( \frac{L}{\zeta n} \right)^{1/3}$, we get $\|\hf - f^*\|^2_n \lesssim ( \frac{L\zeta^2}{n} )^{2/3}$, implying $ \|\hf - f^*\|^2_n \lesssim \|\hf^{(j)} - f^*\|^2_n$.
\end{theorem}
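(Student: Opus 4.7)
The plan is to follow the standard recipe for non-parametric least squares based on local Gaussian complexity (as in \cite[Chap.~13]{wainwright2019high}), instantiated here with the mixture-aware bound from Proposition~\ref{lem:gauss_com_mix}. The shifted class $\mathcal{F}^*_{Lip}=\{f-f^*:f\in \mathcal{F}_{Lip}\}$ is star-shaped around the origin, and the noise $\xi_i$ is $\zeta^2$-sub-Gaussian, so the general machinery applies: with high probability, $\|\hf - f^*\|_n^2 \lesssim (\delta^*)^2$ whenever $\delta^*$ is (the smallest) positive solution of the critical inequality
\begin{equation*}
\gaupop_n(\delta,\mathcal{F}^*_{Lip}) \;\lesssim\; \delta^2/\zeta .
\end{equation*}
My only job, therefore, is to produce a tractable upper bound on $\gaupop_n(\delta,\mathcal{F}^*_{Lip})$ under the mixture and then solve the resulting equation.

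Next, I would invoke Proposition~\ref{lem:gauss_com_mix} to obtain $\gaupop_n(\delta,\mathcal{F}^*_{Lip}) \leq \gaupopj_n(\delta,\mathcal{F}^*_{Lip}) + 2\zeta \delta \gamma_j$ for the base distribution $\cD_j$, and then use the classical Kolmogorov--Tikhomirov metric entropy estimate $\log N(\varepsilon,\mathcal{F}_{Lip},\|\cdot\|_\infty) \asymp L/\varepsilon$ together with Dudley's entropy integral to get the homogeneous bound $\gaupopj_n(\delta,\mathcal{F}^*_{Lip}) \lesssim \zeta\sqrt{L\delta/n}$. Substituting yields the critical equation stated in the theorem, $\zeta\sqrt{L\delta/n}+2\zeta\delta\gamma_j = \delta^2$.

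To finish, I would solve for $\delta^*$ by a two-regime analysis of the critical equation. If the first term dominates, ignoring constants one gets $\delta^3 \asymp L\zeta^2/n$, hence $(\delta^*)^2 \asymp (L\zeta^2/n)^{2/3}$. The first term dominates precisely when $\zeta\sqrt{L\delta^*/n} \gtrsim \zeta \delta^* \gamma_j$, i.e.\ $\gamma_j \lesssim \sqrt{L/(n\delta^*)}$; plugging in $\delta^* \asymp (L\zeta^2/n)^{1/3}$ simplifies to $\gamma_j \lesssim (L/(\zeta n))^{1/3}$, which is exactly the range claimed. Under this condition the resulting rate matches the homogeneous minimax rate $\Theta((L\zeta^2/n)^{2/3})$ for $\hf^{(j)}$, giving the final comparison $\|\hf - f^*\|_n^2 \lesssim \|\hf^{(j)}-f^*\|_n^2$.

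The main obstacle I anticipate is the high-probability (rather than in-expectation) statement with the specific tail $\exp(-n^{1/3}/2\zeta^{2/3})$. This requires translating the in-expectation Gaussian-complexity bound into a concentration statement for the empirical process $\sup_{g\in \mathcal{F}^*_{Lip},\,\|g\|_n\leq \delta}\frac{1}{n}\sum_i \xi_i g(x_i)$; the exponent $n^{1/3}/\zeta^{2/3}$ arises by applying a sub-Gaussian deviation bound at the scale $n(\delta^*)^2/\zeta^2$ and substituting $(\delta^*)^2 \asymp (L\zeta^2/n)^{2/3}$ (treating $L$ as constant). The only subtlety specific to the mixture case is that the concentration must be carried out under $\tilde{\cD}$, but this only uses boundedness of Lipschitz functions on $[0,1]$ and sub-Gaussianity of the noise, both of which are preserved by mixing.
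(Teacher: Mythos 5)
Your proposal follows essentially the same route as the paper: invoke Proposition~\ref{lem:gauss_com_mix} to reduce the mixture's local Gaussian complexity to $\gaupopj_n(\delta,\mathcal{F}^*_{Lip}) + 2\zeta\delta\gamma_j$, plug in the standard bound $\gaupopj_n(\delta,\mathcal{F}^*_{Lip}) \lesssim \zeta\sqrt{L\delta/n}$ from \cite[Chap.~13]{wainwright2019high}, solve the resulting critical equation, and identify the regime $\gamma_j \lesssim (L/(\zeta n))^{1/3}$ where the first term dominates and the homogeneous rate $(L\zeta^2/n)^{2/3}$ is recovered. The only cosmetic discrepancy is a factor-of-$\zeta$ bookkeeping mismatch between your stated critical inequality $\gaupop_n(\delta,\mathcal{F}^*_{Lip}) \lesssim \delta^2/\zeta$ and the $\zeta$-inclusive complexity bound you substitute into it, but this washes out order-wise and you land on the same critical equation and conclusion as the paper.
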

\noindent\emph{Convex-Lipschitz Regression:}
We characterize a stricter class of functions, namely the set of all convex Lipschitz functions $\mathcal{F}_{conv} = \{ f:[0,1] \rightarrow \R | f(0) = 0, \hspace{1mm} \, f \hspace{2mm} \text{is} \hspace{2mm} \text{convex and } 1-\text{Lipschitz} \}$. If $\{x_i\}_{i=1}^n \sim \mathcal{D}_j$ for any $j \in [m]$, and again using some standard results, \cite{aditya_stat},\cite[Chap. 13]{wainwright2019high}, we obtain
\begin{align*}
   ( \frac{\zeta^2}{n} )^{4/5} \lesssim \|\hf^{(j)} - f^*\|^2_n \lesssim ( \frac{\zeta^2}{n} )^{4/5} \end{align*}with probability  $1-2\exp (-\frac{n}{\zeta^2} (\frac{\zeta^2}{n})^{\frac{4}{5}} )$,
where the lower bound is in the minimax sense. 
\begin{theorem}
\label{thm:cvx_nonpara}
Suppose $f^* \in \mathcal{F}_{conv}$ and data $\{x_i\}_{i=1}^n \sim \Tilde{\mathcal{D}} $. With probability at least $1-\exp (-(\frac{n}{\zeta^2})^{\frac{1}{5}})$, we have $\|\hf - f^*\|^2_n \lesssim (\delta^*)^2$, where $\delta^*$ is the solution of the critical equation $(\zeta/\sqrt{n}) \delta^{3/4} + 2\zeta \delta \gamma_j = \delta^2$. Hence, provided $\gamma_j \leq \left( \frac{1}{\zeta}\right)^{\frac{1}{5}}  \left( \frac{1}{ n} \right)^{\frac{2}{5}}$, we have $\|\hf - f^*\|^2_n \lesssim ( \frac{\zeta^2}{n} )^{4/5}$, implying $ \|\hf - f^*\|^2_n \lesssim \|\hf^{(j)} - f^*\|^2_n$.
\end{theorem}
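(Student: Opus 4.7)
The plan is to mimic the structure of the proof of Theorem~\ref{thm:lip_nonpara} (the Lipschitz case) but with the sharper metric entropy estimate that is available for convex $1$-Lipschitz functions on $[0,1]$. The critical equation $(\zeta/\sqrt{n})\delta^{3/4} + 2\zeta\delta\gamma_j = \delta^2$ already tells us what the Gaussian complexity bound should look like: the summand $\zeta\delta^{3/4}/\sqrt{n}$ must come from the local Gaussian complexity of $\mathcal{F}^*_{conv}$ under a base distribution, and the $2\zeta\delta\gamma_j$ summand must come from Proposition~\ref{lem:gauss_com_mix}.

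First, I would invoke Proposition~\ref{lem:gauss_com_mix} to reduce the mixture Gaussian complexity to a base one: for any $j\in[m]$, $\gaupop_n(\delta,\mathcal{F}^*_{conv}) \leq \gaupopj_n(\delta,\mathcal{F}^*_{conv}) + 2\zeta\delta\gamma_j$. Next, I would bound $\gaupopj_n(\delta,\mathcal{F}^*_{conv})$ via Dudley's entropy integral. The classical result of Bronshtein gives $\log N(\epsilon,\mathcal{F}_{conv},\|\cdot\|_\infty) \lesssim \epsilon^{-1/2}$ (convexity dramatically reduces the metric entropy compared to the $\epsilon^{-1}$ rate available for $\mathcal{F}_{Lip}$). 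Feeding this into Dudley's integral yields $\gaupopj_n(\delta,\mathcal{F}^*_{conv}) \lesssim \frac{1}{\sqrt{n}}\int_0^{\delta}\epsilon^{-1/4}d\epsilon \lesssim \frac{\delta^{3/4}}{\sqrt{n}}$. Combining the two bounds produces the right-hand side of the critical equation up to constants.

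With this complexity bound in hand, the standard localization argument (Wainwright, Ch.~13) on least squares tells us that if $\delta^{*}$ solves $\zeta\big(\frac{\delta^{3/4}}{\sqrt{n}}+2\delta\gamma_j\big) = \delta^2$ then $\|\hf-f^*\|_n^2 \lesssim (\delta^*)^2$ with probability at least $1-\exp(-cn(\delta^*)^2/\zeta^2)$. Solving the critical equation in the regime where the first summand dominates (i.e.\ when $\gamma_j$ is sufficiently small) gives $\delta^{5/4}\asymp \zeta/\sqrt{n}$, hence $(\delta^*)^2 \asymp (\zeta^2/n)^{4/5}$, and the probability term becomes $1-\exp(-(n/\zeta^2)^{1/5})$ as claimed.

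It remains to identify exactly when the $2\zeta\delta\gamma_j$ term is negligible compared to $\delta^2$ at $\delta=\delta^*$: this requires $\gamma_j \lesssim \delta^*/\zeta = \zeta^{-1/5}n^{-2/5}$, which is precisely the condition stated in the theorem. Under this condition the mixture-induced term is subsumed and we recover the homogeneous minimax rate $(\zeta^2/n)^{4/5}$, yielding $\|\hf-f^*\|_n^2 \lesssim \|\hf^{(j)}-f^*\|_n^2$. The only genuinely non-routine ingredient is importing the Bronshtein-type $\epsilon^{-1/2}$ entropy bound for convex Lipschitz classes; everything else is bookkeeping on top of Proposition~\ref{lem:gauss_com_mix} and the standard critical-radius machinery.
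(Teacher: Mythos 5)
Your proposal is correct and follows essentially the same route as the paper: reduce the mixture Gaussian complexity to a base distribution via Proposition~\ref{lem:gauss_com_mix}, use the bound $\gaupopj_n(\delta,\mathcal{F}^*_{conv}) \lesssim \zeta\delta^{3/4}/\sqrt{n}$ (the paper simply cites Wainwright Ch.~13 for this, which is exactly the Dudley--Bronshtein computation you spell out), and solve the critical equation in the regime where the heterogeneity term is dominated. Your condition $\gamma_j \lesssim \delta^*/\zeta$ and the resulting rate and tail probability all match the paper's derivation.
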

\noindent\emph{H\"older Smooth Function Classes:}
To generalize the above mentioned function classes, we consider the H\"older smooth classes. Fix $\alpha >0$, and $\beta$ be the largest integer strictly smaller than $\alpha$. The function class $\mathcal{S}_\alpha$ consists of functions $f$ on $[0,1]$ satisfying the following:
(a) $f$ is continuous on $[0,1]$ and $\beta$ times differentiable on $(0,1)$; (b) $|f^{(k)}(.)| \leq 1$ for all $k=1,..,\beta$; and (c) $|f^{(\beta)}(x) - f^{(\beta)} (y)| \leq |x-y|^{\alpha-\beta}$ for all $x,y \in (0,1)$.

Note that if $\alpha =1$, $\mathcal{S}_1$ is just class of all bounded Lipschitz functions. For $\alpha =2$, we get the convex Lipschitz class. In general, $\alpha$ denotes the smoothness of the class.
Using bounds for statistical error (see,~\cite{aditya_stat}), if $\{x_i\}_{i=1}^n \sim \mathcal{D}_j$ for any $j \in [m]$,
\begin{align*}
   ( \frac{\zeta^2}{n} )^{\frac{2\alpha}{1+2\alpha}} \lesssim  \|\hf^{(j)} - f^*\|^2_n \lesssim ( \frac{\zeta^2}{n} )^{\frac{2\alpha}{1+2\alpha}}\end{align*}with probability $1-2\exp (-\frac{n}{\zeta^2} (\frac{\zeta^2}{n})^{\frac{2\alpha}{1+2\alpha}} )$.

So, the statistical rate is tightly bounded around $\Theta((\frac{\zeta^2}{n} )^{\frac{2\alpha}{1+2\alpha}})$.
\begin{theorem}
\label{thm:holder_nonpara}
Suppose $f^* \in \mathcal{S}_\alpha$ and data $\{x_i\}_{i=1}^n \sim \Tilde{\mathcal{D}} $. With probability at least $1-\exp (-\frac{n}{\zeta^2} (\frac{\zeta^2}{n})^{\frac{2\alpha}{1+2\alpha}}$, we have $\|\hf - f^*\|^2_n \lesssim (\delta^*)^2$, where $\delta^*$ is the solution of the critical equation $(\zeta/\sqrt{n}) \delta^{1-(1/2\alpha)} + 2\zeta \delta \gamma_j = \delta^2$. Accordingly, provided $\gamma_j \leq \left( \frac{1}{\zeta}\right)^{\frac{1}{1+2\alpha}}  \left( \frac{1}{n} \right)^{\frac{\alpha}{1+2\alpha}}$, we obtain $\|\hf - f^*\|^2_n \lesssim ( \frac{\zeta^2}{n} )^{\frac{2\alpha}{1+2\alpha}}$, implying $ \|\hf - f^*\|^2_n \lesssim \|\hf^{(j)} - f^*\|^2_n$.
\end{theorem}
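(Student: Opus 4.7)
The plan is to instantiate the general machinery linking local Gaussian complexity to in-sample prediction error (as developed in Wainwright Ch.~13) in the mixture setting, using Proposition~\ref{lem:gauss_com_mix} to reduce everything to the well-understood behaviour of a single base distribution $\cD_j$. The three ingredients I would chain together are: (i) a metric-entropy based bound on $\gaupopj_n(\delta,\mathcal{F}^*)$ for the H\"older class $\mathcal{S}_\alpha$, (ii) the mixture-to-base reduction of Proposition~\ref{lem:gauss_com_mix}, and (iii) the critical-radius prediction-error theorem applied to the composite complexity bound.

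First I would establish the base-distribution local Gaussian complexity bound. It is classical that $\log N(\epsilon,\mathcal{S}_\alpha,\|\cdot\|_\infty)\lesssim \epsilon^{-1/\alpha}$, and the same covering-number order transfers to the empirical $L^2(\cD_j)$ metric. Dudley's entropy integral then yields
\begin{equation*}
\gaupopj_n(\delta,\mathcal{F}^*) \;\lesssim\; \frac{1}{\sqrt{n}}\int_0^{\delta} t^{-1/(2\alpha)}\,dt \;\lesssim\; \frac{\delta^{\,1-1/(2\alpha)}}{\sqrt{n}},
\end{equation*}
which produces the first term of the critical equation. Plugging this into Proposition~\ref{lem:gauss_com_mix} gives
\begin{equation*}
\gaupop_n(\delta,\mathcal{F}^*) \;\lesssim\; \frac{\delta^{\,1-1/(2\alpha)}}{\sqrt{n}} + 2\zeta\delta\gamma_j.
\end{equation*}

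Next, I would invoke the standard critical-radius result: any $\delta^*>0$ satisfying $\zeta\,\gaupop_n(\delta^*,\mathcal{F}^*)\leq (\delta^*)^2$ is a valid bound in the sense that $\|\hf-f^*\|_n^2 \lesssim (\delta^*)^2$ with probability at least $1-\exp\!\bigl(-\tfrac{n}{\zeta^2}(\delta^*)^2\bigr)$. Substituting the mixture bound yields exactly the stated critical equation $(\zeta/\sqrt{n})\delta^{1-1/(2\alpha)} + 2\zeta\delta\gamma_j = \delta^2$. Solving in the regime where the first (complexity) term dominates, i.e. $(\zeta/\sqrt{n})\delta^{1-1/(2\alpha)} \asymp \delta^2$, gives $\delta^* \asymp (\zeta^2/n)^{\alpha/(1+2\alpha)}$ and hence $(\delta^*)^2\asymp (\zeta^2/n)^{2\alpha/(1+2\alpha)}$, recovering the homogeneous minimax rate and producing the claimed tail exponent. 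To certify the dominance of the first term I would require $2\zeta\delta^*\gamma_j \lesssim (\zeta/\sqrt{n})(\delta^*)^{\,1-1/(2\alpha)}$, which, after substituting the optimal $\delta^*$, simplifies exactly to the advertised threshold $\gamma_j \lesssim (1/\zeta)^{1/(1+2\alpha)}(1/n)^{\alpha/(1+2\alpha)}$.

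The main obstacle is step (i): the sharp control of $\gaupopj_n$ through Dudley's integral must be executed carefully because the integrand $t^{-1/(2\alpha)}$ is only integrable when $\alpha>1/2$, and for smaller $\alpha$ one must truncate the integral and add a lower-order ``localization'' term to preserve the $\delta^{1-1/(2\alpha)}$ scaling; this is standard but must be done precisely so that the scaling on the right-hand side of the critical equation matches the statement. A secondary nuisance is passing from the empirical Gaussian complexity (over which Dudley naturally lives) to the population version $\gaupopj_n$ used in Proposition~\ref{lem:gauss_com_mix}, which requires a concentration step for the sup-process; once this is in hand, the remaining algebra is the routine balancing displayed above, and the specialization to $\alpha=1$ and $\alpha=2$ reproduces Theorems~\ref{thm:lip_nonpara} and \ref{thm:cvx_nonpara} as sanity checks.
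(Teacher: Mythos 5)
Your proposal is correct and follows essentially the same route as the paper: bound the base-distribution local Gaussian complexity of $\mathcal{S}_\alpha$ by $\zeta\,\delta^{1-1/(2\alpha)}/\sqrt{n}$, add the $2\zeta\delta\gamma_j$ heterogeneity term via Proposition~\ref{lem:gauss_com_mix}, and solve the resulting critical equation under the condition that the complexity term dominates. The only difference is that you derive the entropy bound from Dudley's integral (and flag the $\alpha\le 1/2$ truncation and the empirical-to-population passage), whereas the paper simply cites this bound from the literature; the balancing algebra, threshold on $\gamma_j$, and tail probability all match.
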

\begin{remark}
The conditions on $\gamma_j$ gets more stringent when we consider more complex regression classes. For Lipschitz to Convex to H\"older smooth (which are  increasingly complex for $\alpha > 2 $), the $\gamma_j$ dependence gets strictly  stronger (from $\gamma_j \sim n^{-1/3}$ for Lipschitz to $\gamma_j \sim n^{-4/5}$ for convex to $\gamma_j \sim n^{-\frac{2\alpha }{1+2\alpha}}$ for $\alpha$-H\"older smooth).
\end{remark}

\section{Mixture of Hyperplanes}
\label{sec:mix_of_hyperplanes}
\vspace{-1.5mm}
In this section, we focus on out-of-sample prediction error of linear regression where the covariate distribution is homogeneous, but the labels are generated from a mixture of hyperplanes. The learner has access to $n$ data points $\{(x_i,y_i)\}_{i=1}^n$. 
Each base distribution $\cD_j$ corresponds to a unique hyperplane $w_j^\star$. To sample the $i^{th}$ datapoint from the mixture $\Tilde{\mathcal{D}} = \sum_{j=1}^m a_j \cD_j$, we first sample  $x_i \overset{iid}{\sim} \cN(0,\nu^2 \I_d)$. Then we sample $w$ from a set $\{w^\star_1,w^\star_2, \dots, w^\star_m\}$, such that $\Pr(w^\star_j)=a_j, j =1, \dots ,m$, the mixing weights.  Then, we generate the label $y_i$ according to $y_i = \ip{w}{x_i} + \xi_i$, where $\xi_i$ is a $0$-mean $\zeta^2$-subGaussian random variable independent of $w$ and $x_i$. 
$\Tilde{\mathcal{D}}$ now denotes a mixture of $m$ different hyperplanes. 
\begin{definition}[Heterogeneity]\label{def:het_mix_planes}
 The heterogeneity in the mixture of hyperplanes is characterized by $\Delta_w \triangleq \max_{j,j'\in [m]}\norm{w_{j'}^\star - w_j^\star}$ (this is exclusively due to model mismatch).
 \end{definition}
One possible method to obtain statistical rates for this mixture is to characterize the subGaussian parameter of this mixture in terms of $\Delta_w$ and plug it in the results for homogeneous distribution~(e.g., from~\cite{pmlr-v23-hsu12}). To do so, if a datapoint $y_i$ is generated as $y_i = \ip{w}{x_i} + \xi_i$ we can express it as $y_i = \ip{w_j^\star}{x_i} + (\ip{w_j^\star - w}{x_i} + \xi_i)$ where $(\ip{w_j^\star - w}{x_i} + \xi_i)$ is the noise. The noise has subGaussian parameter $\zeta^2 + \sigma^2 \Delta_w^2$ but, we cannot use this as the results for the  homogeneous case  crucially require the noise to be either independent of $x_i$ or bounded almost surely both of which are unsatisfied here~\cite{pmlr-v23-hsu12}. In the following, we  resort to an analysis that uses the special properties of this mixture to obtain the statistical error guarantees.

\begin{remark}\label{rem:kl} 
Note that, $\Delta_w$ denotes the maximum distance between the hyperplanes of two base distributions. This also turns out to be a bound on several natural measures of distances between base distributions, such as the KL divergence. In particular, we can connect the  heterogeneity measure in Definition~\ref{assump:tv} to pairwise TV distance, if $\xi$ were Gaussian instead of subGaussian. Indeed, using chain rule and Pinsker's inequality, 
\begin{align*} \small
\gamma &\le \max_{j,j'\in [m]}\sqrt{KL(\cD_j || \cD_{j'})/2}\\&\le \small\max_{j,j'\in [m]}\sqrt{\E_{x}[KL(\cD_j || \cD_{j'})|x]/2} \leq \frac{\nu \Delta_w}{\sqrt{2}\zeta}.
 \end{align*}
\end{remark}\normalsize
Using the square loss, $\ell(w,(x,y)) = \frac{1}{2}(\ip{w}{x} - y)^2$, we learn an estimator which minimizes the empirical risk given by $\hat{w} = \arg\min_{w\in \R^d} \frac{1}{n}\ell(w,(x_i,y_i))$. To compute the out-of-sample prediction error, we define the  population risk for mixture $\Dbar$.
\begin{equation}\label{eq:pop_risk_mix}
\begin{aligned}
    &F(w) = \E_{(x,y)\sim \tilde{\cD}}[\ell(w,(x,y))],\\
    &w^\star  \equiv \arg\min_{w\in \R^d} F(w) = \sum_{j=1}^m a_jw_j^\star
\end{aligned}
\end{equation}
The out-of-sample prediction error of $\hat{w}$ is    $F(\hat{w}) - F(w^\star)$. 
\begin{theorem}\label{thm:mix_planes_linreg} Let $\delta >0$.
For linear regression on mixture of hyperplanes $\tilde{\cD}$, where heterogeneity is defined by Definition~\ref{def:het_mix_planes}, and sample complexity $n > n_{\delta} = \Theta(d(\log d + \log(1/\delta))$, with probability atleast $1-\delta$, the out-of-sample prediction error is
 \small 
 \begin{align*}
F(\hat{w}) - F(w^\star) 
& = \mathcal{O}(\frac{\zeta^2(d + \sqrt{d \log(1/\delta)} +  \log(1/\delta))}{n} +\frac{d\nu^2\Delta_w^2}{n}\log\frac{1}{\delta}) 
\end{align*}
\normalsize
\end{theorem}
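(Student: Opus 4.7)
The plan is to reduce the prediction gap to $\|\hat w - w^\star\|^2$, split $\hat w-w^\star$ via the OLS normal equations into a \emph{pure-noise} piece and a \emph{model-mismatch} piece, and bound each by conditioning on the design matrix $X$. First I carry out the reduction: since $x$ is independent of the latent component index and $\mathbb{E}[w] = w^\star$, the residual $y - \langle w^\star, x\rangle$ is uncorrelated with $x$; expanding $F$ gives $F(w) - F(w^\star) = \tfrac{\nu^2}{2}\|w-w^\star\|^2$. Writing $\eta_i = y_i - \langle w^\star, x_i\rangle = \xi_i + z_i$ with $z_i = \langle v_i, x_i\rangle$ and $v_i \triangleq w_{j_i}^\star - w^\star$, OLS gives
\[
\hat w - w^\star \;=\; (X^\top X)^{-1}X^\top \xi \;+\; (X^\top X)^{-1}\sum_{i=1}^n x_i x_i^\top v_i .
\]
Standard Gaussian-design concentration yields, on an event of probability at least $1-\delta/3$ when $n \gtrsim d(\log d + \log(1/\delta))$, both $\lambda_{\min}(X^\top X) \gtrsim n\nu^2$ and $\max_i \|x_i\|^2 \lesssim \nu^2(d+\log(n/\delta))$.

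For the noise piece I condition on $X$ and apply Hanson--Wright to the quadratic form $\xi^\top P_X \xi$, where $P_X$ is the rank-$d$ projection onto the column space of $X$. Since $\xi$ is $\zeta^2$-subGaussian and independent of $X$, and $\mathrm{tr}(P_X)=d$, Hanson--Wright yields $\xi^\top P_X \xi \lesssim \zeta^2\bigl(d + \sqrt{d\log(1/\delta)} + \log(1/\delta)\bigr)$ with probability $\ge 1-\delta/3$. Dividing by $\lambda_{\min}(X^\top X)$ and multiplying by $\nu^2$ reproduces the first term of the claimed bound.

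The main obstacle, and the technically new step, is the model-mismatch sum $S := \sum_i x_i x_i^\top v_i$, because $z$ is \emph{not} independent of $X$. The key observation is that conditionally on $X$ the $v_i$'s are independent, mean zero (since $\sum_j a_j(w_j^\star-w^\star)=0$), and bounded in norm by $\Delta_w$, so $S$ is a sum of independent conditional-mean-zero vectors in $\mathbb{R}^d$, and a vector Bernstein inequality applies. Its conditional covariance is $\sum_i (x_i^\top M x_i)\, x_i x_i^\top$ with $M = \mathbb{E}[v_i v_i^\top]$; using $\|M\|_{op} \le \Delta_w^2$ together with the design bounds above, its operator norm is at most $\|M\|_{op}\,\max_i\|x_i\|^2\,\|X^\top X\|_{op} \lesssim n d \nu^4 \Delta_w^2$, while $\max_i\|x_i x_i^\top v_i\| \lesssim d\nu^2 \Delta_w$ controls the Bernstein ``$L\log(1/\delta)$'' term. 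Vector Bernstein then gives $\|S\|^2 \lesssim n d \nu^4 \Delta_w^2 \log(1/\delta)$ on an event of probability $\ge 1-\delta/3$, so $\nu^2\|(X^\top X)^{-1} S\|^2 \lesssim d\nu^2\Delta_w^2\log(1/\delta)/n$, which is the second term. A union bound over the three high-probability events, together with $\|\hat w-w^\star\|^2 \le 2\|(X^\top X)^{-1}X^\top \xi\|^2 + 2\|(X^\top X)^{-1} S\|^2$ and $F(\hat w)-F(w^\star) = \tfrac{\nu^2}{2}\|\hat w-w^\star\|^2$, completes the proof.
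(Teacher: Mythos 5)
Your overall route coincides with the paper's: reduce to $F(\hat w)-F(w^\star)=\tfrac{\nu^2}{2}\|\hat w-w^\star\|^2$, split the OLS error into a pure-noise piece and a model-mismatch piece, and handle the noise piece via Hsu--Kakade--Zhang / Hanson--Wright (that part is fine). The gap is in your concentration argument for $S=\sum_i x_ix_i^\top v_i$. You bound only the \emph{operator norm} of the conditional covariance $\Sigma_X=\sum_i(x_i^\top M x_i)\,x_ix_i^\top$ and then assert that ``vector Bernstein'' yields $\|S\|^2\lesssim \|\Sigma_X\|\log(1/\delta)$. No such inequality exists: controlling the \emph{norm} of a $d$-dimensional sum, as opposed to a fixed one-dimensional projection $\langle u,S\rangle$, requires paying either the trace of the covariance or a factor of $d$ from a union bound over a net of directions (a standard Gaussian vector has $\|\Sigma\|=1$ yet $\|S\|\asymp\sqrt{d}$). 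Inserting that missing factor into your numbers gives $\|S\|^2\lesssim nd\nu^4\Delta_w^2\,(d+\log(1/\delta))$, i.e.\ a $d^2$ rate, which would degrade the theorem's second term to $d^2\nu^2\Delta_w^2\log(1/\delta)/n$. You nevertheless land on the stated bound only because your variance estimate $\|M\|_{op}\max_i\|x_i\|^2\,\|X^\top X\|_{op}\lesssim nd\nu^4\Delta_w^2$ is itself loose by exactly a compensating factor of $d$: since $\mathrm{tr}(M)\le\Delta_w^2$, the quadratic form $x_i^\top Mx_i$ concentrates at $\nu^2\,\mathrm{tr}(M)\le\nu^2\Delta_w^2$ rather than at $\|M\|_{op}\|x_i\|^2\approx d\nu^2\Delta_w^2$, so in truth $\|\Sigma_X\|\asymp n\nu^4\Delta_w^2$ while $\mathrm{tr}(\Sigma_X)\asymp nd\nu^4\Delta_w^2$. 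Two cancelling errors are not a proof. To repair the step, either (i) bound $\mathrm{tr}(\Sigma_X)=\sum_i\|x_i\|^2(x_i^\top Mx_i)$ correctly at $nd\nu^4\Delta_w^2$ and use a norm-deviation inequality driven by the trace, or (ii) establish $\|\Sigma_X\|\lesssim n\nu^4\Delta_w^2$ and run the $4^d$-point net explicitly, which reinstates the single factor of $d$ in the right place.

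For comparison, the paper sidesteps this bookkeeping by a different decomposition of the mismatch term: it centers $Z=\tfrac{1}{\nu}\sum_i x_ix_i^\top v_i$ at $n\nu\bar w$ with $\bar w=\tfrac1n\sum_i v_i$, shows via an explicit $\chi^2$ moment-generating-function computation (conditional on the component labels) that $Z-n\nu\bar w$ is sub-exponential with variance proxy $O(n\nu^2\Delta_w^2)$, applies a net-based norm bound for sub-exponential vectors --- which is precisely where the single factor of $d$ legitimately enters --- and treats $\bar w$ separately as an average of $n$ bounded mean-zero vectors. Your idea of conditioning on $X$ and exploiting that the $v_i$ are conditionally independent, mean-zero and bounded is a legitimate alternative to that centering, but the concentration step must be redone along one of the two lines above before the claimed $d\nu^2\Delta_w^2\log(1/\delta)/n$ term is actually established.
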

The proof is in the full paper. Note that, the model mismatch led to a second term in generalization error.
We now consider the homogeneous setting. For base distribution $\cD_j$, we sample $n$ datapoints $(x_i, y_i)$,  where $x_i$ is sampled from the same distribution, $y_i$ is generated using a fixed $w_j^\star$. The distribution of noise is still the same, $0$-mean $\zeta^2$-subGaussian. Using the same loss function $\ell$, the empirical risk minimizer in this case is $\hat{w}_j$, the population risk is $F_j(w) = \E_{(x,y) \sim \cD_j}[\ell(w,(x,y))]$, its minimizer is $w_j^\star$ and the out-of-sample prediction error is $F_j(\hat{w}_j) - F_j(w_j^\star)$. Then, with probability $1-\delta$, for $n \geq n_\delta$, we obtain,
\begin{align*}
    \frac{d\zeta^2}{n} \lesssim F_j(\hat{w}_j) - F_j(w_j^\star) \lesssim \frac{d\zeta^2}{n}.
\end{align*}
The upper bound is provided in ~\cite{pmlr-v23-hsu12} and the (standard) lower bound can be found in ~\cite{vaart_1998,LehmCase98}. 
Comparing this with Theorem~\ref{thm:mix_planes_linreg}, the mixture of hyperplanes obtains strictly larger out-of-sample error than base distribution, with the difference depending on  heterogeneity $\Delta_w$. However, both the mixture and base distribution require the same number of samples $n_\delta$.
\begin{corollary}
If $\Delta_w \leq \frac{\zeta}{\nu}$, then the out-of-sample prediction error of the mixture $\tilde{\cD}$ is no worse (order-wise) than a component  $\cD_j$ , $F(\hat{w}) - F(w^\star) \lesssim F_j(\hat{w}_j) - F_j(w_j^\star)$, for any $j \in [m]$.
\end{corollary}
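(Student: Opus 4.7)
The plan is to deduce the corollary directly from Theorem~\ref{thm:mix_planes_linreg} together with the stated order-matching bounds $F_j(\hat{w}_j) - F_j(w_j^\star) \asymp d\zeta^2/n$ for the homogeneous case. The key observation is that Theorem~\ref{thm:mix_planes_linreg} cleanly decomposes the mixture excess risk into (i) a \emph{statistical} term of order $\zeta^2(d + \sqrt{d\log(1/\delta)} + \log(1/\delta))/n$, which is already of the same order as the homogeneous rate $d\zeta^2/n$, and (ii) a \emph{model-mismatch} term of order $d\nu^2\Delta_w^2 \log(1/\delta)/n$, which is the only contribution depending on the heterogeneity $\Delta_w$. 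It therefore suffices to show that, under the assumed bound $\Delta_w \le \zeta/\nu$, the mismatch term is dominated (order-wise) by the statistical term.

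First, I would substitute $\Delta_w \le \zeta/\nu$ into the mismatch term, which immediately yields $d\nu^2\Delta_w^2\log(1/\delta)/n \le d\zeta^2\log(1/\delta)/n$. Second, I would compare this with the statistical term in the mixture bound: both are of order $d\zeta^2/n$ up to logarithmic factors in $1/\delta$, so under the $\lesssim$ notation used throughout the paper the mismatch contribution is absorbed into the statistical one. Third, adding the two contributions and invoking the homogeneous upper bound $F_j(\hat{w}_j) - F_j(w_j^\star) \lesssim d\zeta^2/n$ (together with its matching lower bound $F_j(\hat{w}_j) - F_j(w_j^\star) \gtrsim d\zeta^2/n$ from \cite{vaart_1998,LehmCase98}) gives $F(\hat{w}) - F(w^\star) \lesssim d\zeta^2/n \lesssim F_j(\hat{w}_j) - F_j(w_j^\star)$ for every $j \in [m]$, as claimed.

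The only mild obstacle is the extra $\log(1/\delta)$ factor carried by the mismatch term, which strictly speaking introduces a logarithmic gap with the pure $d\zeta^2/n$ rate. Because the paper's $\lesssim$ convention (and in particular the $\tilde{\mathcal{O}}(\cdot)$ notation introduced earlier) suppresses logarithmic factors, this gap does not affect the statement as formulated. If one wished to avoid even this mild slack, it would suffice to strengthen the hypothesis to $\Delta_w \lesssim \zeta/(\nu\sqrt{\log(1/\delta)})$. Beyond this bookkeeping point, no new probabilistic or analytic work is required: the corollary is a one-line algebraic consequence of the already-established Theorem~\ref{thm:mix_planes_linreg} and the known homogeneous rate, and no concentration inequality, covariance analysis, or empirical process argument needs to be reopened.
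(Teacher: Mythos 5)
Your proposal is correct and is essentially the paper's own (implicit) argument: substitute $\Delta_w \le \zeta/\nu$ into the model-mismatch term of Theorem~\ref{thm:mix_planes_linreg} so that it is absorbed (up to the $\log(1/\delta)$ factor you rightly flag) into the $d\zeta^2/n$ statistical term, then invoke the matching lower bound $F_j(\hat{w}_j)-F_j(w_j^\star)\gtrsim d\zeta^2/n$ for the homogeneous case. No further work is needed.
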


{\em Acknowledgment:} This research is supported in part by NSF awards 2217058 and 2112665.

\vspace{-2mm}

\bibliographystyle{IEEEtran}
\bibliography{references}

\begin{thebibliography}{10}
\providecommand{\url}[1]{#1}
\csname url@samestyle\endcsname
\providecommand{\newblock}{\relax}
\providecommand{\bibinfo}[2]{#2}
\providecommand{\BIBentrySTDinterwordspacing}{\spaceskip=0pt\relax}
\providecommand{\BIBentryALTinterwordstretchfactor}{4}
\providecommand{\BIBentryALTinterwordspacing}{\spaceskip=\fontdimen2\font plus
\BIBentryALTinterwordstretchfactor\fontdimen3\font minus
  \fontdimen4\font\relax}
\providecommand{\BIBforeignlanguage}[2]{{%
\expandafter\ifx\csname l@#1\endcsname\relax
\typeout{** WARNING: IEEEtran.bst: No hyphenation pattern has been}%
\typeout{** loaded for the language `#1'. Using the pattern for}%
\typeout{** the default language instead.}%
\else
\language=\csname l@#1\endcsname
\fi
#2}}
\providecommand{\BIBdecl}{\relax}
\BIBdecl

\bibitem{wainwright2019high}
M.~J. Wainwright, \emph{High-dimensional statistics: A non-asymptotic
  viewpoint}.\hskip 1em plus 0.5em minus 0.4em\relax Cambridge University
  Press, 2019, vol.~48.

\bibitem{vaart_1998}
A.~W. v.~d. Vaart, \emph{Asymptotic Statistics}, ser. Cambridge Series in
  Statistical and Probabilistic Mathematics.\hskip 1em plus 0.5em minus
  0.4em\relax Cambridge University Press, 1998.

\bibitem{mcmahan2016communication}
H.~B. McMahan, E.~Moore, D.~Ramage, S.~Hampson, and B.~Aguera~y Arcas,
  ``Communication-efficient learning of deep networks from decentralized
  data,'' \emph{arXiv preprint arXiv:1602.05629}, 2016.

\bibitem{mcmahan2017federated}
B.~McMahan and D.~Ramage, ``Federated learning: Collaborative machine learning
  without centralized training data,''
  \url{https://research.googleblog.com/2017/04/federated-learning-collaborative.html},
  2017.

\bibitem{covington2016deep}
P.~Covington, J.~Adams, and E.~Sargin, ``Deep neural networks for youtube
  recommendations,'' in \emph{Proceedings of the 10th ACM conference on
  recommender systems}, 2016, pp. 191--198.

\bibitem{dasgupta1999learning}
S.~Dasgupta, ``Learning mixtures of gaussians,'' in \emph{Foundations of
  Computer Science}, 1999, pp. 634--644.

\bibitem{de1989mixtures}
R.~D. De~Veaux, ``Mixtures of linear regressions,'' \emph{Computational
  Statistics \& Data Analysis}, vol.~8, no.~3, pp. 227--245, 1989.

\bibitem{yi2014alternating}
X.~Yi, C.~Caramanis, and S.~Sanghavi, ``Alternating minimization for mixed
  linear regression,'' in \emph{International Conference on Machine
  Learning}.\hskip 1em plus 0.5em minus 0.4em\relax PMLR, 2014, pp. 613--621.

\bibitem{yi2016solving}
------, ``Solving a mixture of many random linear equations by tensor
  decomposition and alternating minimization,'' \emph{arXiv preprint
  arXiv:1608.05749}, 2016.

\bibitem{balakrishnan2017statistical}
S.~Balakrishnan, M.~J. Wainwright, and B.~Yu, ``Statistical guarantees for the
  em algorithm: From population to sample-based analysis,'' \emph{The Annals of
  Statistics}, vol.~45, no.~1, pp. 77--120, 2017.

\bibitem{klusowski2019estimating}
J.~M. Klusowski, D.~Yang, and W.~Brinda, ``Estimating the coefficients of a
  mixture of two linear regressions by expectation maximization,'' \emph{IEEE
  Transactions on Information Theory}, vol.~65, no.~6, pp. 3515--3524, 2019.

\bibitem{krishnamurthy2019sample}
A.~Krishnamurthy, A.~Mazumdar, A.~McGregor, and S.~Pal, ``Sample complexity of
  learning mixture of sparse linear regressions,'' \emph{Advances in Neural
  Information Processing Systems}, vol.~32, 2019.

\bibitem{pal2020recovery}
S.~Pal and A.~Mazumdar, ``Recovery of sparse signals from a mixture of linear
  samples,'' in \emph{International Conference on Machine Learning}.\hskip 1em
  plus 0.5em minus 0.4em\relax PMLR, 2020, pp. 7466--7475.

\bibitem{gandikota2020recovery}
V.~Gandikota, A.~Mazumdar, and S.~Pal, ``Recovery of sparse linear classifiers
  from mixture of responses,'' \emph{Advances in Neural Information Processing
  Systems}, vol.~33, pp. 14\,688--14\,698, 2020.

\bibitem{ghosh_mixed}
\BIBentryALTinterwordspacing
S.~Pal, A.~Mazumdar, R.~Sen, and A.~Ghosh, ``On learning mixture of linear
  regressions in the non-realizable setting,'' in \emph{Proceedings of the 39th
  International Conference on Machine Learning}, ser. Proceedings of Machine
  Learning Research, K.~Chaudhuri, S.~Jegelka, L.~Song, C.~Szepesvari, G.~Niu,
  and S.~Sabato, Eds., vol. 162.\hskip 1em plus 0.5em minus 0.4em\relax PMLR,
  17--23 Jul 2022, pp. 17\,202--17\,220. [Online]. Available:
  \url{https://proceedings.mlr.press/v162/pal22b.html}
\BIBentrySTDinterwordspacing

\bibitem{chaganty2013spectral}
A.~T. Chaganty and P.~Liang, ``Spectral experts for estimating mixtures of
  linear regressions,'' in \emph{International Conference on Machine
  Learning}.\hskip 1em plus 0.5em minus 0.4em\relax PMLR, 2013, pp. 1040--1048.

\bibitem{li2018learning}
Y.~Li and Y.~Liang, ``Learning mixtures of linear regressions with nearly
  optimal complexity,'' \emph{arXiv preprint arXiv:1802.07895}, 2018.

\bibitem{max_affine}
A.~Ghosh, A.~Pananjady, A.~Guntuboyina, and K.~Ramchandran, ``Max-affine
  regression with universal parameter estimation for small-ball designs,'' in
  \emph{2020 IEEE International Symposium on Information Theory (ISIT)}, 2020,
  pp. 2706--2710.

\bibitem{ghosh2020alternating}
A.~Ghosh and R.~Kannan, ``Alternating minimization converges super-linearly for
  mixed linear regression,'' in \emph{International Conference on Artificial
  Intelligence and Statistics}.\hskip 1em plus 0.5em minus 0.4em\relax PMLR,
  2020, pp. 1093--1103.

\bibitem{mohri2018foundations}
M.~Mohri, A.~Rostamizadeh, and A.~Talwalkar, \emph{Foundations of machine
  learning}.\hskip 1em plus 0.5em minus 0.4em\relax MIT press, 2018.

\bibitem{wainwright_2019}
M.~J. Wainwright, \emph{High-Dimensional Statistics: A Non-Asymptotic
  Viewpoint}, ser. Cambridge Series in Statistical and Probabilistic
  Mathematics.\hskip 1em plus 0.5em minus 0.4em\relax Cambridge University
  Press, 2019.

\bibitem{aditya_stat}
A.~Guntuboyina, ``Lecture notes, {U}niversity of {C}alifornia {B}erkeley,
  {STAT} 210b,''
  \url{https://www.stat.berkeley.edu/~aditya/resources/FullNotes210BSpring2018.pdf},
  [Online; accessed 16-May-2023].

\bibitem{pmlr-v23-hsu12}
\BIBentryALTinterwordspacing
D.~Hsu, S.~M. Kakade, and T.~Zhang, ``Random design analysis of ridge
  regression,'' in \emph{Proceedings of the 25th Annual Conference on Learning
  Theory}, ser. Proceedings of Machine Learning Research, S.~Mannor, N.~Srebro,
  and R.~C. Williamson, Eds., vol.~23.\hskip 1em plus 0.5em minus 0.4em\relax
  Edinburgh, Scotland: PMLR, 25--27 Jun 2012, pp. 9.1--9.24. [Online].
  Available: \url{https://proceedings.mlr.press/v23/hsu12.html}
\BIBentrySTDinterwordspacing

\bibitem{LehmCase98}
E.~L. Lehmann and G.~Casella, \emph{Theory of Point Estimation}, 2nd~ed.\hskip
  1em plus 0.5em minus 0.4em\relax New York, NY, USA: Springer-Verlag, 1998.

\bibitem{massart_finite}
J.~Abernathy, ``Lecture notes, rademacher complexity and massart’s lemma,''
  \emph{EECS 598-005: Theoretical Foundations of Machine Learning, University
  of Michigan Ann-Arbor}, 2015.

\bibitem{norm-subgaussian}
\BIBentryALTinterwordspacing
C.~Jin, P.~Netrapalli, R.~Ge, S.~M. Kakade, and M.~I. Jordan, ``A short note on
  concentration inequalities for random vectors with subgaussian norm,'' 2019.
  [Online]. Available: \url{https://arxiv.org/abs/1902.03736}
\BIBentrySTDinterwordspacing

\end{thebibliography}

\newpage

\onecolumn
\appendices
\begin{center}
    \Large \textbf{Appendix for "Learning and Generalization with Mixture Data"}
\end{center}
\hrulefill
\section{Proofs for Generalization}

\subsection{Proof of Proposition \ref{lem:rad_mix} }
    We have
\begin{align*}
    \radpop_{n}(\cH) &= \mathbb{E}_{\cD_j} \rad_{n}(\cH) + (\mathbb{E}_{\Tilde{\cD}} \rad_{n}(\cH) - \mathbb{E}_{\cD_j} \rad_{n}(\cH)) \\
    & \leq \radpopj_{n}(\cH) + | \mathbb{E}_{\Tilde{\cD}} \rad_{n}(\cH) - \mathbb{E}_{\cD_j} \rad_{n}(\cH) | \\
    &  \leq \radpopj_{n}(\cH) + \int |\rad_{n}(\cH)| \,  |\mathsf{d}P_{\Tilde{\cD}} - \mathsf{d}P_{\cD_j} | \\
    & \leq \radpopj_{n}(\cH) + 2B(n) \, \gamma_j,
\end{align*}
where the last inequality comes from boundedness in $\rad_{n}(\cH)$ and Definition~\ref{assump:tv}.

\subsection{Proof of Case I: Linear $\ell_2$-regularized Hypothesis class}
\begin{proof}
We continue
\begin{align*}
   & \rad_n (\mathcal{H}) = \frac{1}{n} \E_{\sigma} \bigg [ \sup_{w: \|w\| \leq W_2}  \bigg| \sum_{i=1}^n \sigma_i \langle w,x_i \rangle \bigg | \bigg] \\
    & = \frac{1}{n} \E_{\sigma} \bigg [ \sup_{w: \|w\| \leq W_2}  \bigg| \langle w, \sum_{i=1}^n \sigma_i x_i \rangle \bigg | \bigg] 
     = \frac{W_2}{n} \E_{\sigma} \bigg [   \|  \sum_{i=1}^n \sigma_i x_i \| \bigg] \\
    & \leq \frac{W_2}{n} \sqrt{\E_{\sigma} \bigg [     \sum_{i=1}^n \|\sigma_i x_i \|^2 \bigg]} 
     = \frac{W_2}{n} \sqrt{\E_{\sigma} \bigg [     \sum_{i=1}^n \| x_i \|^2 \bigg]} 
     \leq \frac{W_2 R}{\sqrt{n}}.
\end{align*}
\end{proof}

\subsection{Proof of Case II: Linear $\ell_1$-regularized Hypothesis class}
\begin{proof}
We continue
\begin{align*}
   & \rad_n (\mathcal{H}) = \frac{1}{n} \E_{\sigma} \bigg [ \sup_{w: \|w\|_1 \leq W_1}  \bigg| \sum_{i=1}^n \sigma_i \langle w,x_i \rangle \bigg | \bigg] \\
    & = \frac{1}{n} \E_{\sigma} \bigg [ \sup_{w: \|w\|_1 \leq W_1}  \bigg| \langle w, \sum_{i=1}^n \sigma_i x_i \rangle \bigg | \bigg] 
     = \frac{W_1}{n} \E_{\sigma} \bigg [   \|  \sum_{i=1}^n \sigma_i x_i \|_\infty \bigg] \\
    & \leq \frac{W_1}{n} \sqrt{\E_{\sigma} \bigg [      \sup_j \sum_{i=1}^n \sigma_i x_i[j] \bigg]} 
     \leq \frac{W_1 \sqrt{2 \log d}}{n} \sqrt{\bigg [     \sum_{i=1}^n (x_i[j])^2 \bigg]} \\
    & \leq X_\infty W_1\sqrt{\frac{2 \log \, d }{n}},
\end{align*}
where the third line uses Cauchy Schwartz inequality, and in the fourth line, $x_i[j]$ denotes the $j$-th coordinate of $x_i$. Moreover, in the fifth line we use Massart's finite Lemma \cite{massart_finite}.
\end{proof}
\subsection{Proof of Case III: Bounded Hypothesis class}
\begin{proof}
We show that, for bounded hypothesis class $\cH$, the empirical Rademacher complexity is also bounded. We have $ \rad_{n}(\cH) = \frac{1}{n}\E_{\mathbf{\sigma}}[\sup_{h \in \cH}  |\sum_{i=1}^{n} \sigma_i h(x_i,y_i)  | ] = \frac{1}{n}\E_{\mathbf{\sigma}}[\sup_{h \in \cH}  | \langle \mathbf{\sigma}, \mathbf{h} \rangle  |]$. where $\mathbf{\sigma} = [\sigma_1 \ldots \sigma_{n}]^\top$ and $\mathbf{h} = [h(x_1,y_1) \ldots h(x_{n},y_n)]^\top$. Note that supremum occurs when $\mathbf{h}$ and $\mathbf{\sigma}$ are aligned, which implies $\rad_{n}(\cH) = \frac{1}{n}\E_{\mathbf{\sigma}} [  \| \mathbf{\sigma}\|\, \| \mathbf{h} \|   ]  \leq \frac{1}{n} \E_{\mathbf{\sigma}}  [ (\sqrt{n}) \, (b\sqrt{n}) ] = b$.
\end{proof}

\subsection{Proof of Lemma~\ref{lem:gen_err_mix}}
It is well-known that the generalization error can be upper-bounded by symmetrized argument (see \cite[Theorem 4.10]{wainwright2019high} that
\begin{align*}
    \sup_{f\in \mathcal{F}}|L_n(f) - L(f)| \leq 2 \radpop_{n}(\cH) + \delta,
\end{align*}
where $\mathcal{H}$ is the composite class. In particular we view the loss  $\ell(f(X),Y)$ as  $h(X,Y),$ where $h \in \mathcal{H}$. Invoking Lemma~\ref{lem:rad_mix} yields the result. 

\section{Proofs for Non Parametric Regression with Mixture Data}
\label{sec:non_para_proof}
We start this section by writing the following classical result about non-parameteric regression.

\begin{theorem}[Wainwright, Chap 13]
\label{thm:wain_nonpara}
Suppose $\mathcal{F}^*$ is star-shaped. Let $\delta^* $ be the smallest value satisfying $\gaupop_n(\delta,\mathcal{F}^*) \lesssim \delta^2$. Then, for any $t \geq \delta^*$, we obtain
\begin{align*}
    \|\hf - f^*\|^2_n \leq 16 t \delta^*, \hspace{2mm} \text{w.p.} \hspace{2mm} 1-\exp(-n t \delta^*/2\zeta^2)
\end{align*}
\end{theorem}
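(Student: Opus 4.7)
The plan is to follow the canonical localization argument from empirical process theory for nonparametric least squares, which has three ingredients: a deterministic basic inequality from optimality of $\hf$, a star-shape based rescaling that localizes the analysis to a single radius $t$, and a Gaussian concentration bound for the localized noise-indexed process around the critical radius $\delta^*$.

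First, since $\hf$ minimizes the empirical squared loss over $\mathcal{F}$ and $y_i = f^*(x_i) + \xi_i$, comparing the objective at $\hf$ and $f^*$ yields the basic inequality
\[
\|\hf - f^*\|_n^2 \;\le\; \frac{2}{n}\sum_{i=1}^n \xi_i \bigl(\hf(x_i) - f^*(x_i)\bigr).
\]
Writing $\widehat g := \hf - f^* \in \mathcal{F}^*$, the problem reduces to controlling the right-hand side uniformly, with the caveat that $\widehat g$ itself is random and data-dependent, so some form of localization is needed.

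Second, for each $t > 0$ introduce the localized process
\[
Z_n(t) := \sup_{g \in \mathcal{F}^*,\,\|g\|_n \le t} \Bigl|\tfrac{1}{n}\sum_{i=1}^n \xi_i g(x_i)\Bigr|.
\]
Because $\xi_i \sim \mathcal{N}(0,\zeta^2)$, one has $\E Z_n(t) \le \zeta\, \gaupop_n(t, \mathcal{F}^*)$. The star-shape of $\mathcal{F}^*$ supplies two key facts. (a) $\delta \mapsto \gaupop_n(\delta,\mathcal{F}^*)/\delta$ is non-increasing, so for any $t \ge \delta^*$ the defining critical inequality yields $\gaupop_n(t) \le t\,\delta^*$, giving $\E Z_n(t) \lesssim \zeta\, t\, \delta^*$. (b) For any $g \in \mathcal{F}^*$ with $\|g\|_n = s > t$, the rescaled function $(t/s) g$ still lies in $\mathcal{F}^*$ and has empirical norm $t$, so $|\tfrac{1}{n}\sum_i \xi_i g(x_i)| \le (s/t) Z_n(t)$. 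Applied to $\widehat g$, the basic inequality becomes $s^2 \le 2(s/t) Z_n(t)$ whenever $s := \|\widehat g\|_n \ge t$, i.e.\ $s \le 2 Z_n(t)/t$.

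Third, view $Z_n(t)$ as a function of the Gaussian noise vector $(\xi_1,\dots,\xi_n)$. Conditionally on the design it is $(\zeta t/\sqrt{n})$-Lipschitz, so Borell--TIS concentration yields $\Pr[Z_n(t) \ge \E Z_n(t) + u] \le \exp(-n u^2/(2\zeta^2 t^2))$. Choosing $u \asymp \zeta\, t\, \delta^*$ and combining with the mean bound gives $Z_n(t) \lesssim \zeta\, t\, \delta^*$ on an event of probability at least $1 - \exp(-n t \delta^*/(2\zeta^2))$. Substituting into $s \le 2 Z_n(t)/t$ and resolving (after the standard absorption of $\zeta$ into $\delta^*$ as in \cite{wainwright2019high}) delivers $\|\hf - f^*\|_n^2 \le 16\, t\, \delta^*$ on that event. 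The main obstacle is careful bookkeeping of the noise scale $\zeta$ so that the probability constants match the statement exactly, and bridging the empirical Gaussian complexity $\gau_n$ (which appears after conditioning on the design) with the population version $\gaupop_n$ that appears in the critical equation; this requires a separate concentration step on the design, or equivalently working throughout in the fixed-design setting and noting that star-shape preserves the localization step at the empirical level.
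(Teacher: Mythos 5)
First, note that the paper does not actually prove this statement: it is quoted as a known result (Theorem 13.5 of Wainwright, Chap.~13), so the only benchmark is the textbook argument. Your reconstruction has the right three ingredients (basic inequality, star-shaped rescaling, Gaussian concentration of a localized supremum), but there is a genuine gap in the choice of localization radius. You localize at radius $t$ and split on whether $s:=\|\hf-f^*\|_n\ge t$. In the case $s<t$ your argument yields only $\|\hf-f^*\|_n^2<t^2$, which is \emph{not} bounded by $16\,t\,\delta^*$ once $t>16\,\delta^*$; and in the complementary case, the deviation level needed to make the Borell--TIS exponent equal $n t\delta^*/(2\zeta^2)$ with your Lipschitz constant $\zeta t/\sqrt{n}$ is $v=t\sqrt{t\delta^*}$, so the event you actually control is $Z_n(t)\lesssim t\sqrt{t\delta^*}$, not $Z_n(t)\lesssim \zeta t\delta^*$. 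The net effect is that, as written, your argument establishes the conclusion only for $t\asymp\delta^*$ (admittedly the case the paper uses downstream), not for all $t\ge\delta^*$ with the stated probability.

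The fix, which is exactly what Wainwright's Lemma 13.12 implements, is to localize at radius $\sqrt{t\delta^*}$ rather than $t$: either $s\le\sqrt{t\delta^*}$, in which case $s^2\le t\delta^*\le16\,t\delta^*$ and you are done, or $s>\sqrt{t\delta^*}$, in which case the star-shaped rescaling of $\hf-f^*$ down to norm $\sqrt{t\delta^*}$ plus the basic inequality gives $s\le 2Z_n(\sqrt{t\delta^*})/\sqrt{t\delta^*}$. The mean bound $\E Z_n(\sqrt{t\delta^*})\lesssim\sqrt{t\delta^*}\,\delta^*\le t\delta^*$ (non-increasing ratio of $\gaupop_n(\delta,\mathcal{F}^*)/\delta$ plus $\delta^*\le\sqrt{t\delta^*}$), together with Borell--TIS at deviation $t\delta^*$ and Lipschitz constant $\zeta\sqrt{t\delta^*}/\sqrt{n}$, yields precisely the exponent $n(t\delta^*)^2/(2\zeta^2\, t\delta^*)=n t\delta^*/(2\zeta^2)$ and the bound $s\le4\sqrt{t\delta^*}$, i.e.\ $s^2\le16\,t\delta^*$. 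Your remaining caveats --- absorbing $\zeta$ into the critical equation, and bridging the empirical and population Gaussian complexities since Wainwright's result is a fixed-design statement conditional on the covariates --- are correctly identified, and are in fact glossed over by the paper's own statement of the theorem.
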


\subsection{Proof of Proposition~\ref{lem:gauss_com_mix}}
The proof goes along the same
line as the proof of Proposition~\ref{lem:rad_mix}, uses standard Gaussian norm bound and hence omitted.

\subsection{Proof of Theorem~\ref{thm:lin_nonpara}}
we look at centered one dimensional Lipschitz functions. Similarly, we can define $\mathcal{F}^*_{Lin}$ as a $f^*$ shifted version of the same. From \cite[Ch. 13]{wainwright2019high}, we obtain, $\gaupopj_n(\delta,\mathcal{F}^*_{Lin}) \lesssim \zeta \delta \sqrt{\frac{d}{n}} $. Using Proposition~\ref{lem:gauss_com_mix}, the critical equation now becomes $\zeta \delta \sqrt{d/n} + 2\zeta \delta \gamma_j = \delta^2$ (see \cite[Chap. 13]{wainwright2019high}), with the solution $\delta^*$ as the statistical rate.

Furthermore, from the above calculation and Theorem~\ref{thm:wain_nonpara}, provided, $\gamma_j \leq \sqrt{d/n}$, we write the critical equation as, $\zeta \delta \sqrt{\frac{d}{n}} \lesssim \delta^2$, or equivalently, the critical value of $\delta$, given by $\delta^* = (\zeta \frac{d}{n})$. Consequently, the condition on $\gamma_j$ turns out to be
\begin{align*}
    \gamma_j \leq \sqrt{\frac{d}{n}},
\end{align*}
and equivalently, the statistical prediction rate (from Theorem~\ref{thm:wain_nonpara}) is
\begin{align*}
     \|\hf - f^*\|^2_n \lesssim \zeta^2 \frac{d}{n}, \hspace{2mm} \text{w.p.} \hspace{2mm} 1-\exp \bigg (-n/\zeta^2 \bigg)
\end{align*}

\subsection{Proof of Theorem~\ref{thm:lip_nonpara}}
In particular, we look at centered one dimensional Lipschitz functions. Similarly, we can define $\mathcal{F}^*_{Lip}$ as a $f^*$ shifted version of the same. From \cite[Ch. 13]{wainwright2019high}, we obtain, $\gaupopj_n(\delta,\mathcal{F}^*_{Lip}) \lesssim \zeta \sqrt{\frac{L \delta}{n}} $. Using Proposition~\ref{lem:gauss_com_mix}, the critical equation now becomes $\zeta\sqrt{L\delta/n} + 2\zeta \delta \gamma_j = \delta^2$ (see \cite[Chap. 13]{wainwright2019high}), with the solution $\delta^*$ as the statistical rate.

Furthermore, from the above calculation and Theorem~\ref{thm:wain_nonpara}, provided, $\gamma_j \leq \sqrt{\frac{L}{n\delta}}$, we write the critical equation as, $\zeta \sqrt{\frac{L \delta}{n}} \lesssim \delta^2$, or equivalently, the critical value of $\delta$, given by $\delta^* = (\frac{L\zeta^2}{n})^{1/3}$. Consequently, the condition on $\gamma_j$ turns out to be
\begin{align*}
    \gamma_j \leq \left( \frac{L}{\zeta n} \right)^{1/3},
\end{align*}
and equivalently, the statistical prediction rate (from Theorem~\ref{thm:wain_nonpara}) is
\begin{align*}
     \|\hf - f^*\|^2_n \lesssim \left( \frac{L\zeta^2}{n} \right)^{2/3}, \hspace{2mm} \text{w.p.} \hspace{2mm} 1-\exp \bigg (-n^{1/3}/2\zeta^{2/3} \bigg)
\end{align*}

\subsection{Proof of Theorem~\ref{thm:cvx_nonpara}}
From \cite[Ch. 13]{wainwright2019high}, we obtain, $\gaupopj_n(\delta,\mathcal{F}^*_{conv}) \lesssim \zeta \frac{1}{\sqrt{n}} \delta^{3/4} $. 

From the above-mentioned theorem, provided, $\gamma_j \leq \sqrt{\frac{1}{\sqrt{n}}} \delta^{-1/4}$, we write the critical equation as, $\zeta \frac{1}{\sqrt{n}} \delta^{1-(1/4)} \lesssim \delta^2$, or equivalently, the critical value of $\delta$, given by $\delta^* = (\frac{\zeta^2}{n})^{\frac{2}{5}}$. Consequently, the condition on $\gamma_j$ turns out to be
\begin{align*}
    \gamma_j \leq \left( \frac{1}{\zeta}\right)^{\frac{1}{5}}  \left( \frac{1}{ n} \right)^{\frac{2}{5}},
\end{align*}
and equivalently, the statistical prediction rate (from Theorem~\ref{thm:wain_nonpara}) is
\begin{align*}
     \|\hf - f^*\|^2_n \lesssim \left( \frac{\zeta^2}{n} \right)^{\frac{4}{5}}, \hspace{2mm} \text{w.p.} \hspace{2mm} 1-\exp \bigg (-\frac{n}{\zeta^2} (\frac{\zeta^2}{n})^{\frac{4}{5}} \bigg )
\end{align*}

\subsection{Proof of Theorem~\ref{thm:holder_nonpara}}
For $\alpha$ smooth function classes, \cite{aditya_stat} obtains the following: $\gaupopj_n(\delta,\mathcal{S}^*_{\alpha}) \lesssim \zeta \frac{1}{\sqrt{n}} \delta^{1-(1/2\alpha)}$

From the above mentioned theorem, provided, $\gamma_j \leq \frac{1}{\sqrt{n}} \delta^{(-1/2\alpha)}$, we write the critical equation as, $\zeta \frac{1}{\sqrt{n}} \delta^{1-(1/2\alpha)} \lesssim \delta^2$, or equivalently, the critical value of $\delta$, given by $\delta^* = (\frac{\zeta^2}{n})^{\frac{\alpha}{1+2\alpha}}$. Consequently, the condition on $\gamma_j$ turns out to be
\begin{align*}
    \gamma_j \leq \left( \frac{1}{\zeta}\right)^{\frac{1}{1+2\alpha}}  \left( \frac{1}{ n} \right)^{\frac{\alpha}{1+2\alpha}},
\end{align*}
and equivalently, the statistical prediction rate (from Theorem~\ref{thm:wain_nonpara}) is
\begin{align*}
     \|\hf - f^*\|^2_n \lesssim \left( \frac{\zeta^2}{n} \right)^{\frac{2\alpha}{1+2\alpha}}, \hspace{0.75mm} \text{w.p.} \hspace{0.75mm} 1-\exp \bigg (-\frac{n}{\zeta^2} (\frac{\zeta^2}{n})^{\frac{2\alpha}{1+2\alpha}} \bigg )
\end{align*}
\begin{remark}
If we substitute $\alpha=1$, we get the rates of the Lipschitz class, and if $\alpha=2$, we get the same for the convex class.
\end{remark}

\section{Proofs for Mixture of Hyperplanes}
\label{sec:mix_planes_proof}

\subsection{Proof of Theorem~\ref{thm:mix_planes_linreg}}
We first provide the exact formulations for population risk of mixture and base distributions.

For each base distribution, the population risk $F_j(w)$ is given by
\begin{align*}
    F_j(w) &= \frac{1}{2}\E[(\ip{w - w_j^\star}{x} + \xi)^2] = \frac{1}{2}\E[\ip{w - w_j^\star}{x}^2] + \frac{1}{2}\E[\xi^2]\\
    &=  \frac{1}{2} (w - w_j^\star)^\intercal \E[x x^\intercal] (w - w_j^\star) + \frac{1}{2}\E[\xi^2]
    = \frac{\nu^2}{2}\norm{w - w_j^\star}^2 + \E[\xi^2]\\
    & = \frac{\nu^2}{2}\norm{w - w_j^\star}^2
\end{align*}
We use $\E[\xi] = 0$ and independence of $\xi$ and $x$ in the first equation. As $x\sim \cN(0,\nu^2 \I_d)$, $\E[x x^\intercal] = \nu^2 \I_d$. From the above expression, we can see that the minimizer of $F_j(w)$ is $w_j^\star$ and $F_j(w_j^\star) = \frac{1}{2}\E[\xi^2]$

The population risk for the mixture, $F(w)$, is given by
\begin{align*}
    F(w)  &= \E_{\Dbar}[\ell(w,(x,y)] = \sum_{j=1}^m a_j F_j(w)\\
    & = \frac{\nu^2}{2}\sum_{j=1}^m a_j \norm{w - w_j^\star}^2  + \sum_{j=1}^m a_j F_j(w_j^\star)\\
    &= \frac{\nu^2}{2}\sum_{j=1}^m a_j \norm{w - w^\star}^2 + \frac{\nu^2}{2}\sum_{j=1}^m a_j\norm{w^\star -  w_j^\star}^2  + F_j(w_j^\star)\\
    &= \frac{\nu^2}{2}\norm{w - w^\star}^2 + F(w^\star)
\end{align*}
From the bias-variance decompostion, we split the terms of $\norm{w - w_j^\star}^2$. We define $w^\star = \sum_{j=1}^m a_j w_j^\star$. From the above expression, we can see that $w^\star$ is the minimizer of $F(w)$ and $F(w^\star) = \frac{\nu^2}{2}\sum_{j=1}^m a_j \norm{w^\star - w_j^\star}^2$.

If $w_{j_i}^\star$ be the correct model for datapoint $i\in [n]$, then the closed form expression for the empirical risk minimizer of the square loss $\hat{w}$  is given by,
\begin{align*}
    \hat{w} = (\frac{1}{n}\sum_{i=1}^n x_i x_i^\intercal)^{-1}( \frac{1}{n}\sum_{i=1}^n x_i x_i^\intercal w_{j_i}^\star + x_i \xi_{i})
\end{align*}

We can now bound the out-of-sample prediction error of $\hat{w}$,
\begin{align*}
F(\hat{w}) - F(w^\star) &\leq 
    \frac{\nu^2}{2} \norm{\hat{w} - w^\star}^2 =  \frac{\nu^2}{2} \norm{(\frac{1}{n}\sum_{i=1}^n  x_{i}x_{i}^\intercal)^{-1}(\frac{1}{n}\sum_{i=1}^n (x_{i}x_{i}^\intercal (w_{j_i}^\star-w^\star) + x_{i} \xi_{i})}^2\\
    &\leq \underset{\text{Noise Term}}{\underbrace{\nu^2\norm{(\frac{1}{n}\sum_{i=1}^n x_{i}x_{i}^\intercal)^{-1}(\frac{1}{n}\sum_{i=1}^n  x_{i} \xi_{i})}^2}}  + \underset{\text{Bias term}}{\underbrace{\nu^2\norm{(\frac{1}{n}\sum_{i=1}^n x_{i}x_{i}^\intercal)^{-1}\frac{1}{n}\sum_{i=1}^n x_{i}x_{i}^\intercal (w_{j_i}^\star-w^\star)}^2}}
\end{align*}
We use $\norm{a+b}^2 \leq 2\norm{a}^2 + 2\norm{b}^2$ to split the error into bias and noise terms. The noise term does not depend on $w_{j_i}^\star$ and the bias term does not depend on the noise $\xi_i$.

We bound these two terms separately
\paragraph{Bound on the Noise Term}
From \cite[Theorem~10]{pmlr-v23-hsu12}, with probability $1-\delta/2$, and $n \geq 6 d(\log d + \log\frac{6}{\delta})$, we obtain
\begin{align*}
    \norm{\frac{1}{\nu^2 n}\sum_{i=1}^n x_{i}x_{i}^\intercal} \leq (1 - \rho')\quad    \text{where } \rho' =& \sqrt{\frac{4d(\log d + \log (6/\delta))}{n}} + \frac{2d(\log d + \log (6/\delta))}{3n}\\
\text{and } \nu^2\norm{(\frac{1}{n} \sum_{i=1}^n x_{i}x_{i}^\intercal)^{-1}(\frac{1}{n}\sum_{i=1}^n  x_{i} \xi_{i})}^2  &\leq \frac{1}{1-\rho'}\frac{\zeta^2(d + 2\sqrt{d \log(6/\delta)} + 2 \log(6/\delta))}{n}
\end{align*}

\paragraph{Upper Bound on the Bias Term}
By using Cauchy Schwarz, we separate the term $\norm{\frac{1}{\nu^2 n}\sum_{i=1}^n x_i x_i^\intercal}^2$.
\begin{align*}
    \nu^2\norm{(\frac{1}{n}\sum_{i=1}^n x_{i}x_{i}^\intercal)^{-1}\frac{1}{n}\sum_{i=1}^n x_{i}x_{i}^\intercal (w_{j_i}^\star-w^\star)}^2 \leq \norm{(\frac{1}{\nu^2 n}\sum_{i=1}^n  x_{i}x_{i}^\intercal)^{-1}}^2\cdot\norm{\frac{1}{n\nu}\sum_{i=1}^n x_{ij}x_{ij}^\intercal (w_{j_i}^\star-w^\star)}^2
\end{align*}
From ~\cite[Theorem~10]{pmlr-v23-hsu12}, we know that with probability atleast $1-\delta/2$, we have,
\begin{align}\label{eq:lr_bias_int}
    \norm{(\frac{1}{\nu^2 n}\sum_{i=1}^m \sum_{j=1}^n x_{ij}x_{ij}^\intercal)^{-1}}^2 \leq \frac{1}{(1-\rho')^2}
\end{align}

Therefore, to obtain a suitable bound on the bias term, we need to bound $\norm{Z'}^2$, where $Z' =\frac{1}{n\nu}\sum_{i=1}^n x_{i}x_{i}^\intercal (w_{j_i}^\star-w^\star) = \frac{1}{n\nu}\sum_{i=1}^n x_{i}x_{i}^\intercal \bar{w}_i$ where $\bar{w}_i = w_{j_i}^\star - w^\star$.

First, note that we need to bound the norm of a $0$-mean random vector. Let $W = \{w_1^\star,w_2^\star, \ldots, w_j^\star\}$.
\begin{align*}
    \E_{W,\cD_x}[\frac{1}{n\nu}\sum_{i=1}^n x_{i}x_{i}^\intercal (w_{j_i}^\star-w^\star)] =\E_{W}[\E_{\cD_x}[\frac{1}{n\nu}\sum_{i=1}^n x_{i}x_{i}^\intercal (w_{j_i}^\star-w^\star) | W]] =  \frac{\nu}{n}\sum_{i=1}^m\E_{W}[w_{j_i}^\star - w^\star] = 0
\end{align*}
To show some concentration, we need to write down the Moment Generating Function for this random variable. This procedure however, does not yield a term decreasing in the number of samples $n$. To obtain sharper decay, we need to carefully separate out the contributions due to the  two different probability distributions,
 namely that of $\bar{w}_i$ and $x_i$. 

Define $Z = n Z'$ and $\bar{w} = \frac{1}{n}\sum_{i=1}^n \bar{w}_i$.
Then, using $\norm{a+b}^2 \leq 2\norm{a}^2 + 2\norm{b}^2,\forall a,b\in \R^d$, we obtain,
\begin{align*}
     \norm{Z'}^2  =   \norm{\frac{Z}{n}}^2 \leq 2\norm{\frac{Z - n \nu \bar{w}}{n}}^2 + 2 \nu^2\norm{\bar{w}}^2
\end{align*}
Note that the second term does not depend on $x_i$ and only on the distribution of $\bar{w}_i$. Lemma~\ref{lem:bias_subexp_lr} shows that the first term depends on only the distribution of $x_i$ and does not depend on individual $\bar{w}_i$. Precisely, $Z = \sum_{i=1}^n x_{i}x_{i}^\intercal \bar{w}_i/\nu - n\nu \bar{w}$ is a sub-Exponential random variable.

\begin{lemma}\label{lem:bias_subexp_lr}
If $x_{i}\overset{iid}{\sim} \cN(0,\nu^2\I_d)$, then the random vector $Z = \frac{1}{n}\sum_{i=1}^n x_{i}x_{i}^\intercal \bar{w}_i/\nu$, we have   
\begin{equation}
    \E[\exp(\ip{u}{Z - n\nu \bar{w}})]\leq  \exp(2n\nu^2 \Delta_w^2\norm{u}^2),\quad 
\end{equation}
where $\norm{u} \leq \frac{1}{4\nu \Delta_w}$. Therefore, $Z - n\nu \bar{w}$ is a $(4n\nu^2 \Delta_w^2, 4\Delta_w^2)$ sub-Exponential random variable.
\end{lemma}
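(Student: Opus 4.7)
The plan is to condition on the $\bar{w}_i$'s, exploit independence of the Gaussian vectors $x_i$ across $i$ to factor the MGF, and then reduce to a single Gaussian quadratic-form computation. I would reparametrize $x_i = \nu g_i$ with $g_i\overset{iid}{\sim}\cN(0,\I_d)$ and observe that
\[
Z - n\nu\bar{w} \;=\; \nu\sum_{i=1}^n \bigl(g_i g_i^\intercal - \I_d\bigr)\bar{w}_i,
\]
each summand being mean-zero given $\bar{w}_i$. Independence of the $g_i$ lets the conditional MGF factorize, so it suffices to establish a per-index bound $M_i(u) := \E_{g_i}[\exp(\nu\, g_i^\intercal (u\bar{w}_i^\intercal)\, g_i - \nu\ip{u}{\bar{w}_i})] \le \exp(2\nu^2\norm{u}^2\norm{\bar{w}_i}^2)$. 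The triangle inequality applied to $\bar{w}_i = \sum_j a_j(w_{j_i}^\star - w_j^\star)$ gives $\norm{\bar{w}_i}\le\Delta_w$ almost surely, so multiplying across the $n$ independent factors and taking the outer expectation over the $\bar{w}_i$'s will deliver the stated bound $\exp(2n\nu^2\Delta_w^2\norm{u}^2)$.

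For the per-index MGF, I would symmetrize $u\bar{w}_i^\intercal$ into $A := \tfrac{1}{2}(u\bar{w}_i^\intercal + \bar{w}_i u^\intercal)$ and invoke the classical Gaussian identity $\E[\exp(g^\intercal A g)] = \det(\I - 2A)^{-1/2}$, valid when $\I - 2A \succ 0$. Since $A$ has rank at most $2$ and acts nontrivially only on $\mathrm{span}\{u,\bar{w}_i\}$, a short $2\times 2$ diagonalization in that subspace produces the two nonzero eigenvalues $\ip{u}{\bar{w}_i} \pm \norm{u}\norm{\bar{w}_i}$. Writing the shorthand $c := \nu\ip{u}{\bar{w}_i}$ and $s := \nu\norm{u}\norm{\bar{w}_i}$, and noting that $|c|\le s$ by Cauchy--Schwarz, the MGF collapses to
\[
M_i(u) \;=\; \exp(-c)\bigl[(1-\alpha)(1+\beta)\bigr]^{-1/2},\qquad \alpha := c+s\ge 0,\ \beta := s-c\ge 0.
\]

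The remaining step is an elementary Taylor estimate. Restricting to $\norm{u}\le 1/(4\nu\Delta_w)$ guarantees $\alpha,\beta\in[0,1/2]$, where the standard bounds $-\log(1-\alpha)\le \alpha+\alpha^2$ and $-\log(1+\beta)\le -\beta+\beta^2$ apply. Plugging these in and using the cancellation $-2c + \alpha - \beta = 0$ gives $2\log M_i(u) \le \alpha^2+\beta^2 = 2(c^2+s^2) \le 4s^2$, which is exactly the desired $\log M_i(u)\le 2\nu^2\norm{u}^2\norm{\bar{w}_i}^2$. Multiplying over $i$ and reading off parameters in the Bernstein/sub-Exponential sense finishes the proof. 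The main technical hurdle, in my view, is the rank-$2$ eigenvalue computation and recognizing the clean product form $(1-\alpha)(1+\beta)$ for the determinant; once that structure is exposed, both the independence-based factorization and the Taylor bound are routine, and the precise range $\norm{u}\le 1/(4\nu\Delta_w)$ is dictated by the requirement $\alpha,\beta\le 1/2$.
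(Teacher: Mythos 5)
Your proof is correct, and while it follows the same skeleton as the paper's argument (condition on $W=\{w_{j_i}^\star\}$, use independence of the $x_i$ to factor the conditional MGF, compute the MGF of each Gaussian quadratic form in closed form, then Taylor-bound the logarithm), it diverges at the one step that actually matters — and your version is the rigorous one. The paper asserts that $u^\intercal x_i x_i^\intercal \bar{w}_i \overset{d}{=} \nu^2\ip{u}{\bar{w}_i}Y_i$ with $Y_i\sim\chi^2_1$, which is false unless $u$ and $\bar{w}_i$ are collinear: for $u\perp\bar{w}_i$ that claim would make the term vanish identically, whereas $(u^\intercal x_i)(\bar{w}_i^\intercal x_i)$ is then a nondegenerate product of independent Gaussians. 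Your rank-$2$ symmetrization $A=\tfrac{1}{2}(u\bar{w}_i^\intercal+\bar{w}_iu^\intercal)$ with eigenvalues $\tfrac{1}{2}(\ip{u}{\bar{w}_i}\pm\norm{u}\norm{\bar{w}_i})$ gives the correct per-term MGF $\left[(1-\alpha)(1+\beta)\right]^{-1/2}$ in place of the paper's $(1-2\nu\ip{u}{\bar{w}_i})^{-1/2}$, and your cancellation $-2c+\alpha-\beta=0$ followed by $\alpha^2+\beta^2=2(c^2+s^2)\le 4s^2$ recovers exactly the stated constant $2n\nu^2\Delta_w^2\norm{u}^2$ and the range $\norm{u}\le\tfrac{1}{4\nu\Delta_w}$ (needed so that $\alpha\le 1/2$). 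In short, your route repairs a genuine gap in the paper's derivation at no cost in the final bound. Two cosmetic remarks: the eigenvalues you quote, $\ip{u}{\bar{w}_i}\pm\norm{u}\norm{\bar{w}_i}$, are those of $2A$ rather than $A$ (harmless, since the determinant identity you then apply is the correct one), and you are implicitly reading $Z=\sum_{i=1}^n x_ix_i^\intercal\bar{w}_i/\nu$ without the $\tfrac{1}{n}$, which is the normalization consistent with the centering $n\nu\bar{w}$ and with the surrounding text; the $\tfrac{1}{n}$ in the lemma statement is a typo.
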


\begin{proof}
We first evaluate the moment generating function of $Z - n\nu \bar{w}$, conditioning on the choice of $W = \{w_i^\star\}_{i=1}^m$

\begin{align*}
    \E[\exp(\ip{u}{Z-n\nu \bar{w}})|W] &= \exp(-n\nu\ip{u}{\bar{w}})\prod_{i=1}^{n} \E[\exp(u^\intercal x_{i}x_{i}^\intercal \bar{w}_i/\nu)]\\
    &= \exp(-n\nu\ip{u}{\bar{w}})\prod_{i=1}^{n} \E[\exp(u^\intercal x_{i}x_{i}^\intercal \bar{w}_i/\nu)]
\end{align*}
Since $x_{i} \sim \cN(0,\nu^2\I_d)$,  conditioned on the set $W$, $x_{i}x_{i}^\intercal$ is a Wishart distribution with $1$ degree of freedom and mean $\nu^2\I_d$. Therefore, $u^\intercal x_{i} x_{i}^\intercal \bar{w}_i = \nu^2 \ip{u}{\bar{w}_i}Y_{i}$, where $Y_{i}$ follows a $\chi^2$ distribution with $1$ degree of freedom. Since $x_{i}$ are iid, therefore, $Y_{i}$ are also iid.

\begin{align*}
    \E[\exp(\ip{u}{Z - n\nu\bar{w}}) | W] &= \exp(-n\nu\ip{u}{\bar{w}})\prod_{i=1}^{n} \E[\exp(\nu\ip{u}{\bar{w}_i}Y_{i})|W]\\
    &=\exp(-n\nu\ip{u}{\bar{w}})\prod_{i=1}^{n} (1 - 2\nu\ip{u}{\bar{w}_i})^{-1/2},\quad \text{ where } \nu\ip{u}{\bar{w}_i} \leq \frac{1}{2} ,\,\forall i \in [n]
\end{align*}
We use the fact that for a $\chi^2_1$ random variable $Y$ $\E[\exp(tY)] = (1 - 2t)^{-1/2}$ for $t < 1/2$. To simplify the above inequality, we take log on both sides.
\begin{align*}
    \log(\E[\exp(\ip{u}{Z - n\nu\bar{w}})|W]) &= -n\nu\ip{u}{\bar{w}}-\frac{1}{2}\sum_{i=1}^{n}\log(1 - 2\nu\ip{u}{\bar{w}_i})\\
    &\leq -n\nu\ip{u}{\bar{w}} + \frac{1}{2}\sum_{i=1}^n (2\nu\ip{u}{\bar{w}_i} + 4\nu^2\ip{u}{\bar{w}_i}^2),\hspace{0.1cm}\text{ for }  \nu\max_{i\in [m]}\ip{u}{\bar{w}_i} \leq \frac{1}{4}\\
    &\leq  2\nu^2\sum_{i=1}^n\ip{u}{\bar{w}_i}^2 \leq 2n\nu^2 \Delta_w^2 \norm{u}^2
\end{align*}
We now simplify the bound on $\ip{u}{w_i^\star -w^\star}$ to obtain the sub-Exponential condition. We use $\log( 1- x) \geq x - x^2 , x \leq 0.68$. Further, we cancel the terms of $\bar{w}$ from both sides and bound $\norm{\bar{w}_i}^2$ by $\Delta_w^2$. Note that to obtain $\E[\exp(\ip{u}{Z}]$ by removing the log, as upper bound is does not depend on $W$.

Note that a sufficient condition on $u$ is obtained by using Cauchy-Schwarz,
\begin{align*}
    \nu\ip{u}{\bar{w}_i} &\leq \nu \max_{i\in [n]}\norm{\bar{w}_i} \norm{u}\leq \frac{1}{4}\\
    \norm{u}&\leq \frac{1}{4\nu \max_{i\in [n]}\norm{\bar{w}_i}} \leq \frac{1}{4\nu \Delta_w} 
\end{align*}
\end{proof}

To bound the term of $\norm{Z - n\nu \bar{w}}$, we need a bound on the norm of a $0$-mean $(v^2,\alpha)$-sub-Exponential random variable.
Lemma~\ref{lem:subexp_norm_conc} provides us this bound. Its proof follows the bound on norm of sub-Gaussian random variables provided in ~\cite{norm-subgaussian}.

Therefore, from Lemma~\ref{lem:subexp_norm_conc}, with probability $1-\delta/8$, we have , 
\begin{align*}
    \norm{\frac{Z - n\nu \bar{w}}{n}}^2 \leq 32\frac{\nu^2\Delta_w^2 d}{n}\log\left(\frac{32}{\delta}\right) 
\end{align*}

To handle the contribution of the term $\bar{w}_i$, note that it is the mean of $n$ random variables each of which has a bounded norm.
\begin{lemma}\label{lem:w_conc}
If $\norm{\bar{w}_i}\leq \Delta_w$, then with probability $1-\delta/8$, we have,
\begin{align*}
    \norm{\bar{w}}^2 \leq \frac{d\Delta_w^2}{2n} \log\left(\frac{8}{\delta}\right)
\end{align*}
where $\bar{w} = \frac{1}{n}\sum_{i=1}^n \bar{w}_i = \frac{1}{n} \sum_{i=1}^n w_{j_i}^\star - w^\star$.    
\end{lemma}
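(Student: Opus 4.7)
The plan is to reduce the claim to a standard vector sub-Gaussian norm concentration, leveraging that each summand is i.i.d., zero-mean, and deterministically bounded. First I would record that the indices $j_1,\ldots,j_n$ are i.i.d.\ draws from the mixture weights $(a_1,\ldots,a_m)$, so the vectors $\bar{w}_i = w_{j_i}^\star - w^\star$ are i.i.d.\ across $i$. Moreover $\E[w_{j_i}^\star] = \sum_{j=1}^m a_j w_j^\star = w^\star$ by the closed form of $w^\star$ in \eqref{eq:pop_risk_mix}, so each $\bar{w}_i$ is zero-mean; and by Definition~\ref{def:het_mix_planes}, $\norm{\bar{w}_i} = \norm{w_{j_i}^\star - w^\star} \le \max_{j,j'\in[m]}\norm{w_j^\star - w_{j'}^\star} = \Delta_w$.

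Next, I would extract a directional sub-Gaussian bound. For every unit vector $u\in\R^d$, the scalar $\ip{u}{\bar{w}_i}$ is zero-mean and supported in $[-\Delta_w,\Delta_w]$, so by Hoeffding's lemma it is $\Delta_w^2$-sub-Gaussian. Independence then gives that $\ip{u}{\bar{w}} = \frac{1}{n}\sum_i \ip{u}{\bar{w}_i}$ is $(\Delta_w^2/n)$-sub-Gaussian for every unit $u$, i.e.\ $\bar{w}$ is a norm-sub-Gaussian random vector with parameter $\Delta_w^2/n$. At this point I would invoke exactly the vector concentration used to derive the bound on $\norm{(Z-n\nu\bar{w})/n}^2$ in Lemma~\ref{lem:subexp_norm_conc}: for a norm-sub-Gaussian vector in $\R^d$ with parameter $\sigma^2$, one has $\norm{X}^2 \lesssim \sigma^2 d\log(1/\delta')$ with probability at least $1-\delta'$. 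Substituting $\sigma^2 = \Delta_w^2/n$ and $\delta' = \delta/8$, and tracking the constant to the value $1/2$ already baked into that concentration, yields
\begin{equation*}
\norm{\bar{w}}^2 \;\le\; \frac{d\,\Delta_w^2}{2n}\log\!\left(\frac{8}{\delta}\right),
\end{equation*}
as claimed.

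As a sanity check, the same rate (up to constants and a $\log d$ inside the logarithm) drops out of a coordinate-wise argument: each coordinate $[\bar{w}_i]_k$ is bounded by $\Delta_w$, so Hoeffding gives $|[\bar{w}]_k|^2 \lesssim (\Delta_w^2/n)\log(d/\delta)$ for all $k$ after a union bound, and then $\norm{\bar{w}}^2 \le d \max_k [\bar{w}]_k^2$ closes the argument. I expect the only real obstacle to be cosmetic, namely matching the sharp constant $1/2$ and avoiding a $\log d$ term in the stated bound; that requires citing a vector-form norm-sub-Gaussian inequality (the same one already used in Lemma~\ref{lem:subexp_norm_conc}) instead of a coordinate-wise union bound, so no new analytical machinery beyond what the earlier proof relies on is needed.
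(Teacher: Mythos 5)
Your proposal matches the paper's own proof: both establish that $\bar{w}_i$ is a zero-mean bounded (hence sub-Gaussian) vector, conclude that the average $\bar{w}$ is sub-Gaussian with parameter of order $\Delta_w^2/n$, and then apply the norm-of-sub-Gaussian-vector tail bound from the same reference used elsewhere in the appendix. The only difference is a constant-factor bookkeeping issue in the per-summand sub-Gaussian parameter ($\Delta_w^2$ via Hoeffding on $[-\Delta_w,\Delta_w]$ versus the paper's $\Delta_w^2/4$), which you already flag as cosmetic.
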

\begin{proof}
    
 As $\bar{w}_i$ is a $0$-mean random variable whose norm is bounded by $\Delta_w$, $\bar{w}_i$ is a $(\Delta_w^2/4)$-sub-Gaussian random vector. Therefore, as $\bar{w} = \frac{1}{n}\sum_{i=1}^n \bar{w}_i$, $\bar{w}$ is a $\frac{\Delta_w^2}{4n}$ sub-Gaussian random variable.
 
 From ~\cite{norm-subgaussian}, we obtain bounds on the tail of norm of a sub-Gaussian distribution. By setting the tail probability as $1/8$ we finish the proof. 
\end{proof}

\paragraph{Completing the proof}
Combining the bounds for $\bar{w}$ and $Z - n\nu\bar{w}$, we obtain with probability $1 - \delta/2$
\begin{align*}
    \norm{\frac{Z}{n}}^2 \leq 2\norm{\frac{Z - n \nu \bar{w}}{n}}^2 + 2 \nu^2\norm{\bar{w}}^2 \leq \mathcal{O}\left(\frac{d\nu^2 \Delta_w^2}{n}\log\left(\frac{1}{\delta}\right)\right)
\end{align*}

\subsection{Proof of Remark~\ref{rem:kl}}
We can split $KL$ divergence between two base distributions $\cD_j$ and $\cD_{j'}$ as the sum of $KL$ divergences of the covariate distribution $\Pr[x]$, the label distribution $\Pr[y|x]$. For the mixture of hyperplanes, two base distributions differ in only $\Pr[y|x]$, therefore the $KL$ divergences due to $\Pr[x]$ should be $0$. If the noise distribution is $\cN(0,\zeta^2)$, then $\Pr[y|x]$ is also a gaussian with mean $\ip{w_j^\star}{x}$ for the $j^{th}$ base distribution. As $KL(\cN(\mu_1,\zeta^2)|\cN(\mu_2,\zeta^2)) = \frac{\norm{\mu_1 - \mu_2}^2}{2\nu^2}$. In our case, for base distributions $j$ and $j'$,
\begin{align*}
    KL(\cD_i,\cD_j) &= \frac{\ip{x}{w_{j}^\star - w_{j'}^\star}^2}{2\zeta^2}\\
    \E_x[KL(\cD_i,\cD_j)]
    &= \frac{\nu^2\norm{w_i^\star - w_j^\star}^2}{2\zeta^2} \leq \frac{\nu^2\Delta_w^2}{2\zeta^2}
\end{align*}

\subsection{Additional Technical Lemmas}

\begin{lemma}[Projections on a convex set]\label{lem:cvx_proj}
If $\cW$ is a convex and bounded set, then $\forall w_1 \notin \cW$ and $w_2\in \cW$,
\begin{equation}
    \ip{w_1 -proj_{\cW}(w_1)}{w_2 - proj_{\cW}(w_1)}\leq 0
\end{equation}
\end{lemma}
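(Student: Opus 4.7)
The plan is to use the variational (first-order optimality) characterization of the projection. Let $p := \text{proj}_{\cW}(w_1)$, so by definition $p = \arg\min_{w\in \cW}\|w - w_1\|^2$. Fix an arbitrary $w_2 \in \cW$ and, using convexity of $\cW$, form the line segment $w(t) := p + t(w_2 - p) = (1-t)p + t w_2$, which lies in $\cW$ for every $t \in [0,1]$.

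Next, define the one-variable function $\phi(t) := \|w(t) - w_1\|^2$ on $[0,1]$. Since $p$ is the minimizer of $\|\cdot - w_1\|^2$ over $\cW$, we have $\phi(t) \geq \phi(0)$ for all $t \in [0,1]$, and so the right derivative at $0$ must satisfy $\phi'(0^+) \geq 0$. A direct expansion gives
\begin{equation*}
\phi(t) = \|p - w_1\|^2 + 2t\ip{p - w_1}{w_2 - p} + t^2 \|w_2 - p\|^2,
\end{equation*}
from which $\phi'(0) = 2\ip{p - w_1}{w_2 - p}$. The inequality $\phi'(0) \geq 0$ then rearranges to $\ip{w_1 - p}{w_2 - p} \leq 0$, which is exactly the claim.

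There is no real obstacle here: the only subtlety is ensuring that $p$ exists and is uniquely defined, which follows because $\cW$ is convex (uniqueness from strict convexity of the squared norm) and, together with boundedness as assumed (or more generally closedness in finite dimensions), guarantees attainment of the minimum. The hypothesis $w_1 \notin \cW$ is not actually needed for the inequality itself; it only guarantees that $w_1 \ne p$, and the inequality holds trivially when $w_1 \in \cW$ (then $p = w_1$ and the left-hand side is zero). Thus the whole argument reduces to a single convex-combination perturbation and a derivative check.
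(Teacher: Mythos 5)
Your proof is correct, and it takes a genuinely different route from the paper's. You use the standard variational argument: restrict the squared distance to the segment $w(t)=(1-t)p+tw_2\subseteq\cW$, observe that $t=0$ is a minimizer of $\phi(t)=\norm{w(t)-w_1}^2$ on $[0,1]$, and read off the obtuseness condition from $\phi'(0^+)\ge 0$; since $\phi$ is an explicit quadratic in $t$, every step is a one-line computation. The paper instead argues by contradiction: assuming $\ip{w_1-\hat w_1}{w_2-\hat w_1}>0$, it reasons geometrically about the triangle formed by $w_1$, $w_2$, $\hat w_1$, constructs a foot-of-perpendicular point $w'$ on the segment joining $w_2$ and $\hat w_1$ (which lies in $\cW$ by convexity), and uses the Pythagorean identity to exhibit a point of $\cW$ strictly closer to $w_1$ than $\hat w_1$, contradicting minimality. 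The two arguments are morally the same — both exploit that moving from $p$ toward $w_2$ inside $\cW$ would decrease the distance to $w_1$ if the angle were acute — but yours is cleaner and avoids the synthetic-geometry bookkeeping about acute triangles that makes the paper's version harder to audit. Two of your side remarks are also worth keeping: the inequality does not need $w_1\notin\cW$ (it is trivially an equality when $w_1\in\cW$), and attainment of the projection really requires closedness of $\cW$ (boundedness plus convexity alone does not suffice, e.g.\ for an open ball), a hypothesis the lemma statement omits.
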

Assume by contradiction that $\exists w_1 \notin \cW$ and  $w_2 \in \cW$ such that $\ip{w_1 -proj_{\cW}(w_1)}{w_2 - proj_{\cW}(w_1)} > 0$. Let $\hat{w_1} = proj_{\cW}(w_1)$. Then, we obtain,
\begin{align*}
    &\ip{w_1 - \hat{w_1}}{w_2 - \hat{w_1}} >0 \\
    &\norm{w_1 - \hat{w_1}}^2 + \norm{w_2 - \hat{w_1}}^2 > \norm{w_1 - w_2}^2
\end{align*}
We use $2\ip{a}{b} = \norm{a}^2 + \norm{b}^2 - \norm{a-b}^2,\, \forall a,b \in \R^d$. This implies that the angle between $w_1 - \hat{w_1}$ and $w_2 - \hat{w_1}$ is acute. By the definition of projection, $\norm{w_1 - \hat{w_1}}^2 \leq \norm{w_2 - w_1}^2$, therefore by property of triangles, angle between $w_1 - w_2$ and $\hat{w_1} - w_2$ is also acute. Thus, the triangle formed by $w_1,w_2$ and $\hat{w_1}$ is acute. Therefore, $\exists w' \in \cW$ on the line joining $w_2$ and $\hat{w_1}$ which such that $w_1 - w'$ is perpendicular to $w_2  - \hat{w_1}$. Since the set is convex, so $w'\in \cW$. Thus, we obtain,
\begin{align*}
    &\ip{w_1 - w'}{\hat{w_1} - w'} = 0 \\
    &\norm{w_1 - w'}^2 + \norm{\hat{w_1} - w'}^2 = \norm{\hat{w_1} - w_1}^2\\
    &\norm{\hat{w_1} - w'}^2 < \norm{\hat{w_1} - w_1}^2
\end{align*}
We again use $2\ip{a}{b} = \norm{a}^2 + \norm{b}^2 - \norm{a-b}^2,\, \forall a,b \in \R^d$. For the final step, since angle between $w_1 -\hat{w_1}$ and $w_2 - \hat{w_1}$ is acute, $\norm{\hat{w_1} - w'}^2 >0$. The last inequality is a contradiction to the projection property as $\hat{w_1} = \min_{w\in\cW}\norm{w_1 - w}^2$. Therefore, by contradiction $\ip{w_1 - \hat{w_1}}{w_2 - \hat{w_1}}\leq 0$.

\begin{lemma}[Sub-Exponential Norm Concentration]\label{lem:subexp_norm_conc}
If $Z\in \R^d$ is a $0$-mean $(v^2,\alpha)$-sub-Exponential random variable, then
\begin{equation}
    \Pr[\norm{Z} \geq t] \leq \exp(-\frac{1}{4}\min\{\frac{t^2}{2 v^2 d},\frac{t}{\alpha d}\})
\end{equation}
\end{lemma}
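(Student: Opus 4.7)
The plan is to reduce the vector norm concentration to a scalar sub-Exponential tail bound, applied either coordinate-wise or along directions in a net of the unit sphere, and then absorb the resulting prefactor into the exponent. I follow the template of the norm-sub-Gaussian argument referenced in~\cite{norm-subgaussian}, adapted here to the sub-Exponential MGF hypothesis instead of the sub-Gaussian one.

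First I would derive the scalar Bernstein-type inequality implied by the sub-Exponential condition. The hypothesis $\E[\exp(\ip{u}{Z})] \leq \exp(v^2 \norm{u}^2/2)$ for all $u$ with $\norm{u}\leq 1/\alpha$, specialized to $u = \lambda e_i$ with $|\lambda|\leq 1/\alpha$, shows that each coordinate $Z_i$ is a scalar $(v^2,\alpha)$-sub-Exponential random variable. A standard Chernoff argument, optimizing $\lambda$ separately in the Gaussian regime ($\lambda$ interior) and the exponential regime ($\lambda = 1/\alpha$), then yields
$$\Pr[|Z_i| \geq s] \leq 2\exp\bigl(-\min\{s^2/(2v^2),\, s/(2\alpha)\}\bigr).$$

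Next I would reduce the norm event to a coordinate event. Since $\norm{Z}^2 = \sum_{i=1}^d Z_i^2$, the event $\{\norm{Z} \geq t\}$ forces $\max_i |Z_i| \geq t/\sqrt{d}$, so a union bound over the $d$ coordinates combined with the scalar tail gives
$$\Pr[\norm{Z} \geq t] \leq 2d \exp\Bigl(-\min\Bigl\{\tfrac{t^2}{2 v^2 d},\, \tfrac{t}{2\alpha\sqrt{d}}\Bigr\}\Bigr).$$
It then remains to absorb the multiplicative factor $2d$ into the exponent and to relax the $\sqrt{d}$ in the exponential regime to $d$; both loosenings are of the same type and together account for the factor $1/4$ and the matching $d$-dependence in the stated bound. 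Concretely, one verifies that in the nontrivial regime where the right-hand side is at most one, $2d \leq \exp\bigl(\tfrac{1}{4}\min\{t^2/(2v^2 d),\,t/(\alpha d)\}\bigr)$, which delivers the conclusion.

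The main obstacle is the bookkeeping in this last step: the coordinate decomposition naturally produces $t/\sqrt{d}$ rather than the stated $t/d$ in the exponential regime, so one must either weaken the bound as in the statement or work through a slightly different decomposition. An alternative route uses a $\tfrac12$-net $\mathcal{N}$ of the unit sphere of size at most $5^d$, exploits $\norm{Z} \leq 2\max_{v\in\mathcal{N}}\ip{v}{Z}$, applies the scalar Bernstein tail to each direction, and absorbs the $5^d$ prefactor in the same way. Either path reaches the claim up to universal constants, and I do not expect conceptual difficulty beyond careful constant tracking and ensuring the two regimes (sub-Gaussian-like near the mean, exponential in the tail) are handled uniformly in $d$.
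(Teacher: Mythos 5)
Your argument is correct in substance, and your ``alternative route'' via a $\tfrac12$-net of the unit sphere is exactly the paper's proof: the paper sets $u(Z)=Z/\norm{Z}$, takes a $\tfrac12$-cover $\mathcal{Z}$ of the sphere (of size $4^d$ there, $5^d$ in your count; either standard bound works), uses $\norm{Z}\le 2\max_{u\in\mathcal{Z}}\ip{u}{Z}$, applies the scalar sub-Exponential tail at level $t/2$ (which is where the $\tfrac14$ comes from), and absorbs the exponential-in-$d$ cardinality into the exponent. Your primary route --- forcing $\max_i|Z_i|\ge t/\sqrt d$ and union-bounding over the $d$ coordinates --- is a genuinely different and more elementary decomposition: it pays only a prefactor of $2d$ rather than $4^d$, at the price of the $t/\sqrt d$ versus $t/d$ mismatch in the exponential regime, which you correctly observe can be relaxed since $t/(2\alpha\sqrt d)\ge t/(2\alpha d)$. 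Both decompositions reach the stated form.

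The one step to tighten is the absorption of the prefactor. Your justification ``in the nontrivial regime where the right-hand side is at most one'' does not work as written, because $\exp\bigl(-\tfrac14\min\{t^2/(2v^2d),\,t/(\alpha d)\}\bigr)$ is \emph{always} at most one; the inequality $2d\le\exp(\tfrac14 M)$ with $M=\min\{t^2/(2v^2d),\,t/(\alpha d)\}$ actually requires $M\ge 4\log(2d)$, i.e.\ $t$ above a dimension-dependent threshold, and below that threshold the chain of inequalities does not deliver the stated bound. To be fair, the paper's own proof has the identical soft spot: it sets $s=\tfrac14\min\{\cdot\}-\log 4$ against a prefactor of $4^d$ (the subtracted term should be $d\log 4$) and silently folds the discrepancy into the $d$ in the denominator of the exponent. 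So you are at the paper's level of rigor; for an airtight version, either restrict to $t$ large enough that the exponent dominates $\log(2d)$ (respectively $d\log 4$ for the net route), or keep the multiplicative prefactor explicit in the statement.
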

\begin{proof}
First, let $u(Z) = Z/\norm{Z}$. Now, let $\mathcal{Z}$ be a $\frac{1}{2}$-cover of the unit sphere in $d$ dimensions. Then, we can always find a $u_{k(Z)} \in \mathcal{Z}$ such that $\norm{u(Z) - u_{k(Z)}}\leq \frac{1}{2}$.

Then, we have,
\begin{align*}
\ip{u(Z)}{Z}  = \norm{Z} = \ip{u_{k(Z)}}{Z} + \ip{u(Z) - u_{k(Z)}}{Z}\leq\ip{u_{k(Z)}}{Z}  + \norm{Z}/2
\end{align*}
Therefore, $\norm{Z}\leq 2\ip{u_{k(Z)}}{Z}$. Since the size of $\cZ$ is $4^d$ as it is a $\frac{1}{2}$-cover. Now, by using the union bound over all elements in $\cZ$, we obtain,
\begin{align*}
    \Pr[\norm{Z} \geq t] \leq \Pr[\ip{u_{k(Z)}}{Z} \geq t/2] \leq \Pr[\exists u \in \cZ, \ip{u}{Z} \geq t/2]\leq 4^d \exp(-\frac{1}{4}\min\{\frac{t^2}{2 v^2},\frac{t}{\alpha}\}
\end{align*}
If we set $s = \frac{1}{4}\min\{\frac{t^2}{2v^2},\frac{t}{\alpha}\} - \log(4)$, we have,
\begin{align*}
    \Pr[\norm{Z} \geq t] \leq 4^d \exp(-\frac{1}{4}\min\{\frac{t^2}{2 v^2},\frac{t}{\alpha}\} \leq \exp(-s)
\end{align*}
\end{proof}

\end{document}